\newcommand{\ones}{\mathds{1}}
\newtheorem{theorem}{Theorem}[section]
\newtheorem{lemma}{Lemma}[section]
\newcommand{\argmin}{\operatornamewithlimits{argmin~}}
\newcommand{\trace}{\operatorname{trace}}
\newcommand{\spn}{\operatorname{span}}
\newcommand{\im}{\operatorname{im}}
\newenvironment{disarray}%
 {\everymath{\displaystyle\everymath{}}\array}%
 {\endarray}
\newcommand{\D}{\mathbf{D}}
\newcommand{\X}{\mathbf{X}}
\newcommand{\J}{\mathbf{J}}
\newcommand{\PHI}{\mathbf{\Phi}}
\newcommand{\RR}{\mathbb{R}}
\newcommand{\Alpha}{\bm{\alpha}}
\newcommand{\Beta}{\bm{\beta}}
\title{On the optimality of shape and data representation in the spectral domain}
\author{Yonathan Aflalo
\thanks{Computer Science Department, Technion, Israel Institute of Technology, Haifa 32000, Israel}
\and Haim Brezis
\thanks{
Rutgers University, Department of Mathematics, Hill Center, Busch Campus, 110 Frelinghuysen Road, 
   Piscataway,  NJ 08854, USA,
 and
 Department of Mathematics, Technion, Israel Institute of Technology, 32000 Haifa, Israel
}
\and Ron Kimmel
\footnotemark[1]
}
\begin{document}
\maketitle
\abstract{
A proof of the optimality of the eigenfunctions of the Laplace-Beltrami operator (LBO)
 in representing smooth functions on surfaces is provided and adapted to the 
 field of applied shape and data analysis.
It is based on the Courant-Fischer  min-max principle adapted to our case.
The theorem we present supports the new trend in geometry processing of 
 treating geometric structures by using their projection onto the leading eigenfunctions 
  of the decomposition of the LBO.
Utilization of this result can be used for constructing numerically efficient algorithms
 to process shapes in their spectrum.
We review a couple of applications as possible practical usage cases
 of the proposed optimality criteria.
We refer to a scale invariant metric, which is also invariant to bending of the manifold.
This novel pseudo-metric allows constructing an LBO by which a scale 
 invariant eigenspace on the surface is defined.
We demonstrate the efficiency of an intermediate metric, defined as an interpolation
 between the scale invariant and the regular one, in representing geometric
 structures while capturing both coarse and fine details.
Next, we review a numerical acceleration technique for classical scaling, a member 
 of a family of flattening methods known as multidimensional scaling (MDS).
There, the optimality is exploited
 to efficiently approximate all geodesic distances between pairs of points on a given
 surface,  and thereby match and compare between almost isometric surfaces.
Finally, we revisit the classical principal component analysis (PCA) definition
 by coupling its variational form with a Dirichlet energy on the data manifold.
By pairing the PCA with the LBO we can handle cases that go beyond the 
 scope defined by the observation set that is handled by regular PCA.
 }
 
\section{Introduction}
The field of shape analysis has been  evolving rapidly
 during the last decades.
The constant increase in computing power allowed image and shape understanding
  algorithms to efficiently handle difficult problems that could not have been 
  practically addressed in the past. 
A large set of theoretical tools from metric geometry, differential geometry, and 
 spectral analysis has been imported and translated into action within the shape
 and image understanding arena.
Among the myriad of operators recently explored, the {\it Laplace-Beltrami operator} 
 (LBO) is ubiquitous. 
The LBO is an extension of the Laplacian to non-flat multi-dimensional manifolds. 
Its properties have been well studied in differential geometry and it was used 
 extensively in computer graphics. 
It is used to define the {\it heat equation}, that models
  the conduction of heat in solids, and is fundamental in describing basic physical phenomena.
In its more general setting, the Laplace-Beltrami operator admits an eigen-decomposition 
 that yields a {\it spectral domain} that can be viewed as a generalization of the 
 Fourier analysis to any Riemannian manifold. 
The LBO invariance to isometric transformations allowed the theories developed 
 by physicists and mathematician to be useful for modern shape analysis.
Here, we justify the selection of the leading eigenfunctions in the spectral 
 domain as an optimal sub-space for representing smooth functions 
 on a given manifold.
It is used for solving and accelerating existing solvers of various problems in 
 data representation, information processing, and shape analysis. 
As one example, in Section \ref{sec:scale}  we pose the dilemma of metric 
 selection for shape representation while interpolating between a scale invariant metric 
 and the regular one. 
Next, in Section \ref{sec:SMDS} it is shown how the recently introduced 
 spectral classical scaling can benefit from the efficacy property of the suggested subspace.
Finally, in Section \ref{sec:RPCA} we revisit the definition of the celebrated 
 {\it principal component analysis} (PCA)  by regularizing its variational form
 with an additional Dirichlet energy. 
The idea is to balance between two optimal sub-spaces,
 one for the data points themselves - captured by the PCA,
 and one optimally encapsulating the relation between the data points 
 as defined by decomposition of the LBO. 

\section{Notations and motivation}
Consider a parametrized surface $S:\Omega\subset\mathbb{R}^2 \rightarrow \mathbb{R}^3$ (with or without boundary) 
 and a metric $(g_{ij})$
 that defines the affine differential relation of navigating with coordinates $\{\sigma_1,\sigma_2\}$ 
 in $\Omega$ to a distance measured on $S$.
That is, an arc length on $S$ expressed by $\sigma_1$ and $\sigma_2$ would read 
 $ds^2 = g_{11}d\sigma_1^2+2g_{12}d\sigma_1d\sigma_2+g_{22}d\sigma_2^2$.
The Laplace-Beltrami operator acting on the scalar function $f:S\rightarrow \mathbb{R}$ 
 is defined as
$$
 -\Delta_g f=\frac{1}{\sqrt{g}}\sum_{ij}\partial_i\left(\sqrt{g}g^{ij}\partial_j f\right),
$$
 where $g$ is the determinant of the metric matrix, and $(g^{ij})=(g_{i,j})^{-1}$ is the inverse metric, 
 while $\partial_i$ is a derivative with respect to the $i^\text{th}$ coordinate $\sigma_i$.
The LBO operator is symmetric and admits a spectral decomposition $(\lambda_i,\phi_i)$, with $\lambda_1\le\lambda_2\le...$, such that
 \begin{eqnarray*}
 \Delta_g \phi_i&=& \lambda_i\phi_i\cr
 \langle \phi_i,\phi_j\rangle &=&  \delta_{ij},
 \end{eqnarray*}
 where $\langle u,v\rangle=\int_S uv\sqrt{g}dx$, and $\|u\|_2^2 = \int_S |u|^2\sqrt{g}dx$.
In case $S$ has a boundary, we add Neumann boundary condition  
\[
 \frac{\partial \phi_i}{\partial \nu} = 0, \,\,\,\, \mbox{on}\,\,\, \partial S.
\]
Defined by the metric rather than the explicit embedding, makes the LBO and its spectral decomposition 
 invariant to isometries and thus, a popular operator for shapes processing and analysis.
For example, the eigenfunctions and eigenvalues can be used to efficiently approximate
 diffusion distances and commute time distances
  \cite{commute_time,Berard1994,Coifman_Lafon:2006,coifman_PNAS2005,bronstein2010gromov}, 
 that were defined as computational alternatives to geodesic distances, and were shown
  to be robust to topology changes and global scale transformations. 
At another end,  L\'evy \cite{levy2006laplace} proposed to manipulate the geometry of shapes
 by operating in their spectral domain, while Gotsman and Karni \cite{karni00spectral} chose the eigenfunctions 
 as a natural basis for approximating the coordinates of a given shape.
Feature point detectors and descriptors of surfaces were also extracted
  from the same spectral domain.
Such measures include the heat kernel signature (HKS) \cite{HKS,gebal2009shape}, the
  global point signature (GPS) \cite{Rustamov13}, 
 the wave kernel signature (WKS) \cite{aubry2011wave}, 
  and the scale-space representation \cite{zaharescu2009surface}.  

Given two surfaces $S$ and $Q$, and a bijective mapping between them, $\rho:S\rightarrow Q$, 
 Ovsjanikov et al. \cite{fmap} emphasized the fact that 
 %
 the relation between the spectral decomposition of a scalar function $f:S\rightarrow \RR$ and 
 and its representative on $Q$, that is $f\circ \rho^{-1}:Q\rightarrow \RR$,
 is linear.
%
In other words, the geometry of the mapping is captured by $\rho$, allowing the coefficients of the 
 decompositions to be related in a simple linear manner. 
The  basis extracted from the LBO was chosen in this context because of its intuitive efficiency in 
 representing functions on manifolds, thus far justified heuristically. 
The linear relation between the spectral decomposition coefficients of the same 
 function on two surfaces, when the mapping between manifolds is provided,
 was exploited by Pokrass et al. \cite{pokrass} to find the correspondence between
 two almost isometric shapes.  
They assumed that the matrix that links between the LBO eigenfunctions
  of two almost isometric shapes should have dominant coefficients along its diagonal, a 
  property that was first exploited in \cite{joint_diagonalization}.

One could use the relation between the eigen-structures of two surfaces 
 to approximate non-scalar and non-local structures on the manifolds \cite{SGMDS}.
Examples for such functions are geodesic distances 
 \cite{Kim_Seth, Tsitsiklis,Fast_mrch,MMP, Surazhsky:2005}, 
 that serve as an input for
 the {\em Multi-Dimensional Scaling} \cite{MDS_review_book:1997}, 
   the {\em Generalized Multi-Dimensional Scaling} \cite{GMDS},
    and the Gromov-Hausdorff distance  \cite{memoli_sapiro, bronstein2006efficient}.
Using the optimality of representing surfaces and functions on surfaces
  with truncated basis, geodesic distances can now be efficiently 
  computed and matched in the spectral domain.
 
Among the myriads reasons that motivate the choice of 
 the spectral domain for shape analysis,  we emphasize the following, 
\begin{itemize}
\item The spectral domain is isometric invariant.
\item Countless  signal processing tools  that exploit the Fourier basis are available.
        Some of these tools can be generalized to shapes for processing, analysis,
         and synthesis.
\item Most interesting functions defined on surfaces are smooth and can thus 
 be approximated by their projection onto a small number of eigenfunctions.
\item For two given perfectly isometric shapes, the problem of finding correspondences 
 between the shapes appears to have a simple formulation in the spectral domain.
\end{itemize}
Still, a rigorous justification for the selection of the basis defined by the LBO
 was missing in the shape analysis arena.
Along the same line, combing the eigenstructure of the LBO with 
 classical data representation 
 and analysis procedures that operate in other domains like the PCA \cite{PCA}, 
 MDS \cite{MDS_review_book:1997},  and GMDS \cite{GMDS} was yet to come.
Here, we review recent improvements of existing tools that make use 
 of the decomposition of Laplace-Beltrami operator. 
We provide a theoretical justification for using the LBO eigen-decomposition 
 in many shape analysis methods. 
With this property in mind, we demonstrate that it is possible to migrate
 algorithms to the spectral domain while establishing a substantial 
 reduction in complexity. 

\section{Optimality of the LBO eigenspace}
\label{sec:theorm}

In this section we provide a theoretical justification to the choice of the LBO eigenfunctions, 
 by proving that the resulting spectral decomposition is optimal in approximating 
 functions with $L^2$ bounded gradient magnitudes.
Let $S$ be a given Riemannian manifold with a metric $(g_{ij})$, an induced LBO, $\Delta_g$, 
 with associated spectral basis $\phi_i$, where $\Delta_g\phi_i=\lambda_i\phi_i$.
It is shown, for example in \cite{journals/PNAS/AflaloK13},
 that for any $f:S\rightarrow \mathbb{R}$, the representation error
\begin{eqnarray}
\label{eq:rep_err}
 \|r_n\|_2^2 \equiv \left\|f-\sum_{i=1}^n\langle f, \phi_i \rangle \phi_i\right\|_2^2\leq
  \frac{\|\nabla_g f\|_s^2}{\lambda_{n+1}}.
\end{eqnarray}
%
Our next result asserts that the eigenfunctions of the LBO are optimal with respect 
 to estimate error (\ref{eq:rep_err}).
%
\begin{theorem}
 Let  $0\leq\alpha<1$.
There is no
 integer $n$ and no sequence $\{\psi_i\}_{i=1}^n$ of linearly 
 independent functions in $L^2$ such that 
\begin{eqnarray}
\label{eq:bound1}
 \left \|f -\sum_{i=1}^n \langle f,\psi_i\rangle \psi_i\right \|_2^2 \le 
 \frac{\alpha \|\nabla_g f\|_2^2}{\lambda_{n+1}},\,\,\,\,\,\,\,\,\,\, \forall f.
\end {eqnarray}
\begin{proof}
Recall the Courant-Fischer min-max principle,
 see \cite{brezis} Problems 37 and 49, and \cite{Weinberger74}.
We have for every $n\ge0$,
\begin{eqnarray}
\label{eq:lambda}
\lambda_{n+1} &=& \max_{\substack{ \Lambda\\ \mbox{\scriptsize codim} \,\Lambda=n}}
\min_{\substack{f\in\Lambda\\ f\ne 0}}\left \{
    \frac{\|\nabla_g f\|_2^2}{\|f\|_2^2} \right \}.
\end {eqnarray}
That is, the $\min$ is taken over a linear subspace 
$\Lambda \subset H^1(S)$
 (where $H^1(S)$ is the Sobolev space $\{ f\in L^2, \, \nabla_gf\in L^2\}$) of co-dimension $n$
 and the $\max$ is taken over all such subspaces.

Set $\Lambda_0 = \{f\in H^1(S); \,\, \langle f,\psi_i\rangle =0, \,\, i=1,2,...,n\}$,
 so that $\Lambda_0$ is a subspace of codimension $n$.
By (\ref{eq:bound1}) we have $\forall f\ne0,\,\, f\in\Lambda_0$,
\[
\frac{ \|\nabla_g f\|_{2}^2}{\|f\|_{2}^2}\ge \frac{\lambda_{n+1}}{\alpha},
\]
 and thus
\begin{eqnarray}
\label{eq:X0}
X_0 = \min_{\substack{f\in\Lambda_0\\ f\ne0}}
 \left \{ 
 \frac{ \|\nabla_g f\|_{2}^2}{\|f\|_{2}^2}
 \right \}\ge \frac{\lambda_{n+1}}{\alpha}.
\end {eqnarray}
On the other hand, by (\ref{eq:lambda})
 \begin{eqnarray}
 \label{eq:limit}
	\lambda_{n+1}\ge X_0.
\end {eqnarray}
Combining (\ref{eq:X0}) and (\ref{eq:limit}) yields $\alpha \ge 1$.
\end{proof}
\end{theorem}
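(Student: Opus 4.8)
The plan is to argue by contradiction, using the Courant–Fischer min-max characterization of $\lambda_{n+1}$ as the central tool. Suppose such an integer $n$ and such a family $\{\psi_i\}_{i=1}^n$ existed, satisfying the bound \eqref{eq:bound1} for all $f$ with some fixed $\alpha<1$. The idea is to produce a subspace of codimension $n$ on which the Rayleigh quotient $\|\nabla_g f\|_2^2/\|f\|_2^2$ is forced to be at least $\lambda_{n+1}/\alpha$, and then invoke the max side of Courant–Fischer to conclude $\lambda_{n+1}\ge \lambda_{n+1}/\alpha$, i.e.\ $\alpha\ge 1$, contradicting $\alpha<1$.

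Concretely, the first step is to define $\Lambda_0 = \{ f\in H^1(S) : \langle f,\psi_i\rangle = 0,\ i=1,\dots,n\}$, the orthogonal complement (within $H^1$) of the span of the $\psi_i$. Since the $\psi_i$ are linearly independent in $L^2$, the $n$ linear functionals $f\mapsto\langle f,\psi_i\rangle$ are linearly independent, so $\Lambda_0$ has codimension exactly $n$. The second step is to observe that for any nonzero $f\in\Lambda_0$, the projection $\sum_{i=1}^n\langle f,\psi_i\rangle\psi_i$ vanishes, so the left-hand side of \eqref{eq:bound1} is simply $\|f\|_2^2$; the hypothesis then gives $\|f\|_2^2 \le \alpha\|\nabla_g f\|_2^2/\lambda_{n+1}$, which rearranges to $\|\nabla_g f\|_2^2/\|f\|_2^2 \ge \lambda_{n+1}/\alpha$. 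Taking the infimum over $f\in\Lambda_0\setminus\{0\}$ yields a quantity $X_0\ge \lambda_{n+1}/\alpha$.

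The third step is the upper bound: by the Courant–Fischer principle \eqref{eq:lambda}, $\lambda_{n+1}$ is the maximum over all codimension-$n$ subspaces of the minimal Rayleigh quotient, so in particular $\lambda_{n+1}\ge X_0$. Chaining this with $X_0\ge\lambda_{n+1}/\alpha$ gives $\lambda_{n+1}\ge\lambda_{n+1}/\alpha$, hence $\alpha\ge1$ (using $\lambda_{n+1}>0$, which holds since we may assume the constants are excluded or $n\ge1$), contradicting the assumption $\alpha<1$.

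I expect the only genuinely delicate point to be the interplay between $L^2$ and $H^1$: the $\psi_i$ are only assumed to lie in $L^2$, whereas Courant–Fischer quantifies over subspaces of $H^1$. One must check that intersecting the $L^2$-orthogonality constraints with $H^1$ still leaves a subspace of codimension $n$ in $H^1$ — this is fine because the functionals $f\mapsto\langle f,\psi_i\rangle$ restricted to the (dense) subspace $H^1$ remain linearly independent, and more carefully one should note that the infimum defining $X_0$ is over $f\in\Lambda_0$ with $f\ne 0$, where the bound \eqref{eq:bound1} is assumed to hold for \emph{all} $f$, in particular all $f\in H^1$. A secondary subtlety is ensuring $\lambda_{n+1}>0$ so the division is legitimate; on a closed manifold $\lambda_1=0$, so one should either restrict attention to $n\ge1$ or note the statement is vacuous otherwise. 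Everything else is a short rearrangement.
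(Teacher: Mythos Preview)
Your proposal is correct and follows essentially the same argument as the paper: define the codimension-$n$ subspace $\Lambda_0$ of functions orthogonal to the $\psi_i$, observe that on $\Lambda_0$ the hypothesis forces the Rayleigh quotient to be at least $\lambda_{n+1}/\alpha$, and then invoke the Courant--Fischer max--min characterization to derive $\lambda_{n+1}\ge \lambda_{n+1}/\alpha$, a contradiction. Your additional remarks on the $L^2$ versus $H^1$ codimension issue and on the positivity of $\lambda_{n+1}$ are reasonable technical observations that the paper leaves implicit.
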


For the convenience of the reader we present in the appendix 
 a direct proof of a special case of the above result which does not make use of 
 the Courant-Fischer min-max principle.
The above theorem proves the optimality of the eigenfunctions of 
 the LBO in representing $H^1$ functions on manifolds.
In the following sections we apply the optimality property 
 for solving various shape analysis problems. 

\section{Scale invariant geometry} 
\label{sec:scale}
Almost isometric transformation are probably the most common
 ones for surfaces and volumes in nature.
Still, in some scenarios, relations between surfaces  should  
 be described by slightly more involved deformation-models.
Though a small child and an adult are obviously not isometric, 
 and the same goes for a whale and a dolphin,
 the main characteristics are morphometrically similar for mammals in large. 
In order to extend the scope of matching and comparing shapes, 
 a semi-local scale invariant geometry was introduced in \cite{journals/siamis/AflaloKR13}.
There, it was used to define a new LBO by which one can construct an eigenspace
 which is invariant to semi-local and obviously global scale transformations.
  
Let $(g_{ij})$, be the regular metric defined on the manifold. 
In \cite{journals/siamis/AflaloKR13}  the  {\em scale invariant pseudometric} $(\tilde g_{ij})$ 
 is defined as
$$
 \tilde g_{ij}=|K| g_{ij},
$$
  where $K$ is the Gaussian curvature at each point on the manifold. 
One could show that this metric is scale invariant and the same goes for the LBO that it induces,
 namely $\Delta_{\tilde{g}}f=-\frac{1}{\sqrt{\tilde g}}\sum_{ij}\partial_i\left(\sqrt{\tilde g}\tilde{g}^{ij}\partial_j f\right)$. 
A discretization of this operator
 and  experimental results that outperformed state of the art algorithms for shape matching,
 when scaling is involved,
 were presented in  \cite{journals/siamis/AflaloKR13} .
Specifically, the scale invariant geometry allows to find correspondence between two shape related by 
 semi-local scale transformation. 
 
Next, one could think of searching for an optimal representation space for shapes
  by interpolating  between the scale-invariant metric and the regular one.
We define the interpolated pseudometric to be
$$
\hat{g}_{ij} = |K|^\alpha g_{ij},
$$
 where $(\hat{g}_{ij})$ represents the new  pseudometric, $K$ is the Gaussian curvature, 
 and $\alpha\in[0,1]$ is the metric interpolation scalar 
 that we use to control the representation error.
In our setting,  $\hat{g}$ depends on $\alpha$ and represents the regular metric 
 when $\alpha=0$, or the scale invariant one for $\alpha=1$.
 
Figure \ref{fig:horse} depicts the effect of representing a shape's coordinates projected to
 the first 300 eigenfunction of the LBO with a  regular metric  (left), 
 the  scale invariant one (right), and the
   interpolated pseudometric with $\alpha=0.4$ (middle).
The idea is to use only part of eigenfunctions to approximate smooth functions
 on the manifold, treating the coordinates as such.
While the regular natural basis captures the global structure of the surface, as expected, the
 scale invariant one concentrates on the fine features with effective curvature. 
The interpolated one
 is a good compromise between the global structure and the fine details. 
\begin{figure}[htbp]
\centering
\hspace{2cm} \includegraphics[width=0.37\linewidth]{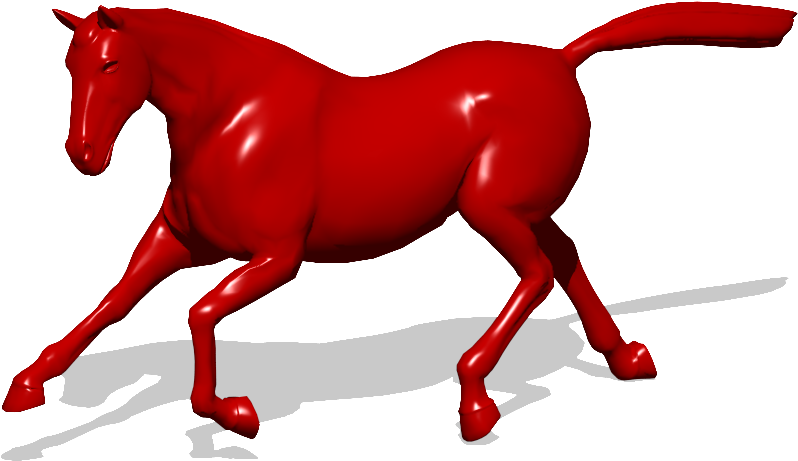}\\
\centering
\mbox{
 \includegraphics[width=0.37\linewidth]{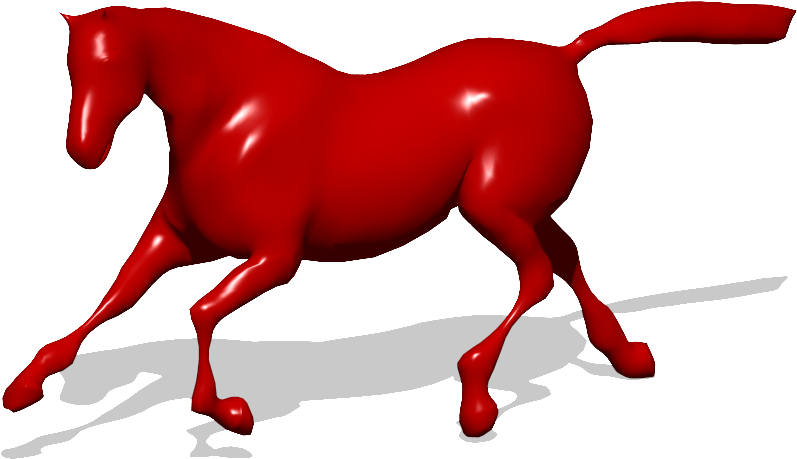}
\includegraphics[width=0.37\linewidth]{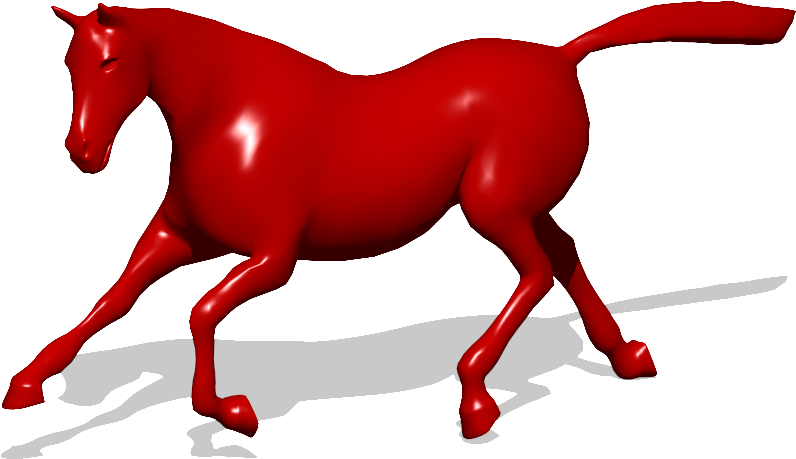}
\includegraphics[width=0.37\linewidth]{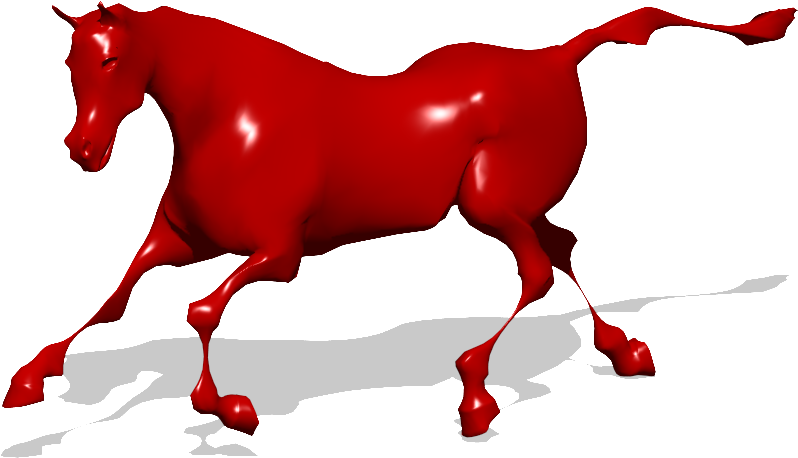}}
\caption{\label{fig:horse}
Top: A given horse model.
Bottom:
 The horse coordinates projected to the first 300 LBO eigenfunctions using a regular metric (left), 
   an intermediate metric (middle), and a scale invariant one (right).
}
\end{figure}

We proved that once a proper metric is defined, the Laplace-Beltrami eigenspace is 
 the best possible space for functional approximation of smooth functions.
Next, we exploit this property to reformulate classical shape analysis algorithms
 such as MDS  in the spectral domain. 
 
 \section{Spectral classical scaling}
 \label{sec:SMDS}
Multidimensional Scaling \cite{MDS_review_book:1997} is a family of data analysis methods 
 that is widely used in machine learning and shape analysis. 
Given an $n\times n$ pairwise distances matrix $\D$, the MDS method finds 
 an embedding of points of $\mathbb{R}^m$, given by an $n\times m$ matrix $\X$,
  such that the pairwise euclidean distances between every two points,
   each defined by a row of $\X$, is as close as possible to their 
   corresponding input pair given by the right entry in  $\D$. 
The classical MDS algorithm minimizes the following functional
 $$
  \X=\argmin_{\X\in\mathbb{R}^{n\times m}}\left\|\X\X^T+\frac{1}{2}\J\D_2\J\right\|_F
 $$
 where $\D_2$ is a matrix such that $(\D_2)_{ij}=\D^2_{ij}$, $\J$ is a centering matrix defined by $\J=\mathbf{I}-\frac{1}{n}\ones\ones^T$, where $\mathbf{I}$ is the identity matrix, $\ones$ is a vector of ones,
  and $\|\cdot\|_F$ is the Frobenius norm.
The solution can be obtained by a singular value decomposition of the matrix $\J\D_2\J$. 
This method was found to be useful when comparing between 
 isometric shapes using their inter-geodesic distances \cite{Elad03,bronstein2006efficient},
 and texture mapping in computer graphics \cite{Zigelman-Kimmel-Kiryati}.
The computation of geodesic distances as well as the SVD of an $n\times n$ matrix 
 can be expensive in terms of memory and computational time.
High resolution shapes with more than $10000$ vertices are difficult do handle with this method. 

In order to reduce the complexity of the problem, it was proposed in \cite{journals/PNAS/AflaloK13}
 to compute geodesic distances between
 a small set of sample points, and then, interpolate the rest of the distances by minimizing
  a bi-harmonic energy in the spectral domain. 
We find a spectral representation of the matrix $\D_2=\PHI\Alpha\PHI^T$,
  where $\PHI$ represents the matrix that discretizes the spectral domain.
 We then embed our problem into the eigenspace of the LBO, defining $\X=\PHI\Beta$, 
  where $\Beta$ is an $m\times k$ matrix, and  $k\ll n$ 
  in order to reduce the overall complexity.
$\X$ is obtained by minimizing 
$$
 \min_{\Beta} \left\|\PHI\Beta\Beta^T\PHI^T+\frac{1}{2}\J\PHI\Alpha\PHI^T\J\right\|_F.
$$
Experimental results of shape canonization comparing  shapes flattened 
 with spectral classical scaling to regular classical scaling results
  were presented in \cite{journals/PNAS/AflaloK13}.
The spectral approach outperformed the classical scaling
 in terms of time and space complexities,
  that lead to overall better accuracy for the spectral version,
 see Figure \ref{fig:MDS}. 
\begin{figure}[htbp]

\mbox{
\includegraphics[width=0.27\columnwidth]{horse2}
\includegraphics[width=0.33\columnwidth]{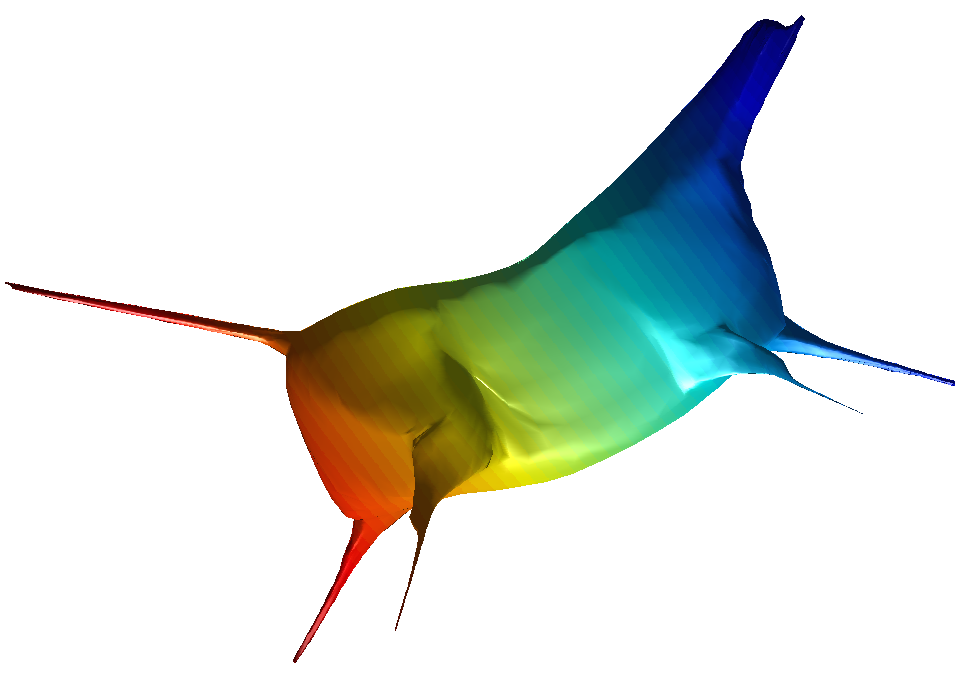}
\includegraphics[width=0.33\columnwidth]{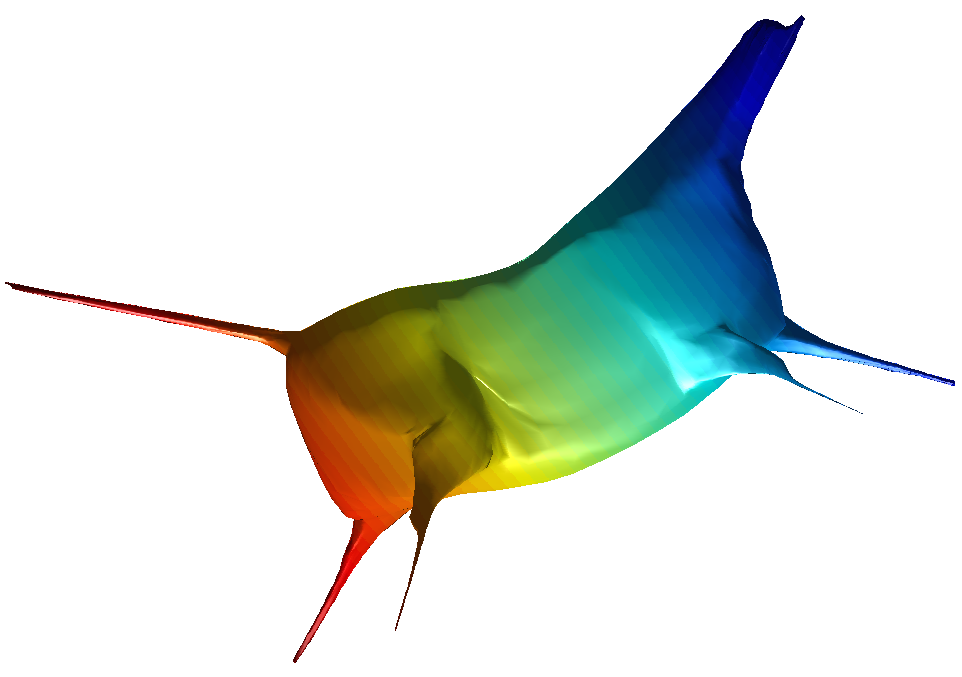}}
\caption{
	MDS flat embedding: Result of regular MDS (middle) and the spectral MDS (right) 
	of the given horse surface (left).
	}
\label{fig:MDS}
\end{figure}
In the next section we introduce a novel design of functional spaces that benefit 
 from both the Laplace-Beltrami operator and  classical {\em principal component analysis}, 
 while extending the scope of each of these measures.
 
\section{Regularized PCA}
\label{sec:RPCA}
The  spectral domain provided by the decomposition of the LBO is 
 efficient in representing smooth functions on the manifold. 
Still, in some scenarios, functions on manifolds could contain discontinuities that 
 do not align with our model assumption. 
Alternatively, some functions could be explicitly provided as known points on the data manifold, 
 in which case, the question of what should be the best representation obtains a new flavor.
The {\it principal component analysis} \cite{PCA} concept allows to extract a low rank 
 orthonormal approximate representation from a set of such data points $x_i$.
Given a set of $k$ vectors $x_i\in\mathbb{R}^n$, the PCA algorithm finds 
  an orthonormal basis of $m\leq k$, defined by its vectors  $P_j$, by minimizing 
$$
\begin{disarray}{l}
\min_\mathbf{P}\sum_{i=1}^k\|\mathbf{P}\mathbf{P}^Tx_i-x_i\|_2^2\\
\text{s.t.}\\
\mathbf{P}^T\mathbf{P}=\mathbf{I}_m.
\end{disarray}
$$
It can be shown that this problem can be written as 
$$
\begin{disarray}{l}
\max_\mathbf{P}\trace(\mathbf{P}\mathbf{P}^T\mathbf{X}\mathbf{X}^T)\\
\text{s.t.}\\
\mathbf{P}^T\mathbf{P}=\mathbf{I}_m,
\end{disarray}
$$
 where $\mathbf{X}$ is a matrix whose $i^{th}$ column is the data point $x_i$. 
At the other end, given a manifold $S$, the spectral basis minimizes 
 the Dirichlet energy of any orthonormal basis defined on $S$, where, 
\begin{equation}
\label{eqn::smooth}
 \begin{disarray}{l}
 \PHI=\argmin_{\{\psi_i\}_{i=1}^n} \sum_{i=1}^n \|\nabla_g \psi_i\|_g^2 \cr
 \text{s.t.}\cr
 \langle \psi_i,\psi_j\rangle_g=\delta_{ij}\,\,\,~~~\forall (i,j),
 \end{disarray}
\end{equation}
 where $\delta_{ij}$ is the Kronecker delta symbol, 
 and $n$ is the number of desired basis functions. 
Using a discretization of the Laplace-Belrami operator,
 it can be shown that the PCA and the computation of a spectral basis could be married.
We can combine both energies, namely, the energy defined by the data projection error 
 and the Dirichlet energy of the representation space. 
The result reads,
\begin{equation}
\label{eqn:pb_smooth_PCA}
\begin{disarray}{l}
   \min_\mathbf{P}\underbrace{\sum_{i=1}^m\|\mathbf{P}\mathbf{P}^T\mathbf{A}x_i-x_i\|_g^2}_{\text{PCA}}
    +\mu\underbrace{\sum_{j=1}^m\|\nabla_g P_j\|_g^2}_{\text{LBO-eigenspace}}\\
   \text{s.t.}\\
\mathbf{P}^T\mathbf{A}\mathbf{P}=\mathbf{I}_m.
\end{disarray}
\end{equation}
Where $\mathbf{A}$ represents the local area normalization factor.
This problem is equivalent to finding a basis that is both 
 smooth and whose projection error on the set of given vectors (data points)
 is minimal, as  shown in \cite{regularized_PCA} and  \cite{GLPCA}. 
When $\mu$ in the above model is set to zero, we have the PCA as a solution. 
At the other end, as $\mu$ goes to infinity, we get back our LBO eigenbasis domain. 
The parameter $\mu$ controls the smoothness of the desired basis. 
The benefits of this hybrid model in representing out of sample information
 are demonstrated in \cite{regularized_PCA}, as can be seen in Figure \ref{fig:RPCA}.


\begin{figure*}[htbp]
\centering
\fbox{
\includegraphics[width=0.17\columnwidth]{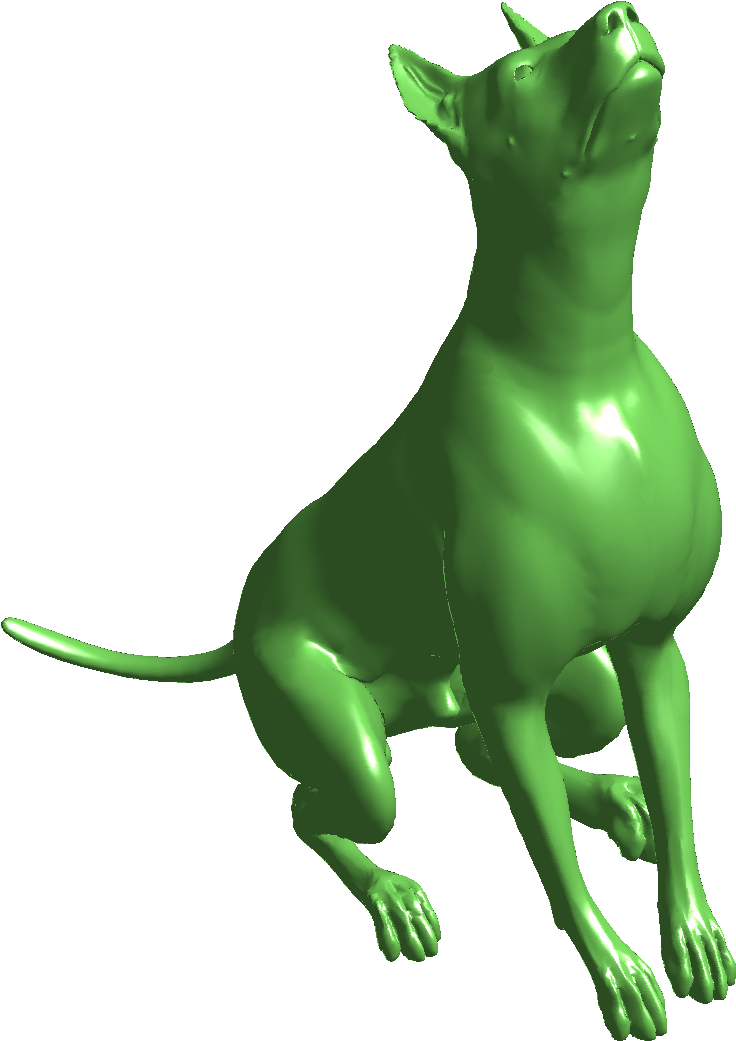}
\includegraphics[width=0.21\columnwidth]{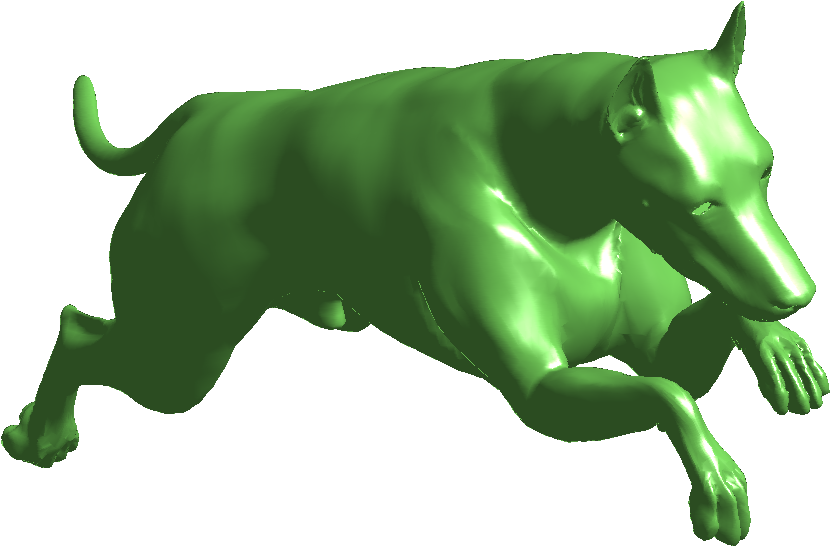}
}
\fbox{
\includegraphics[width=0.22\columnwidth]{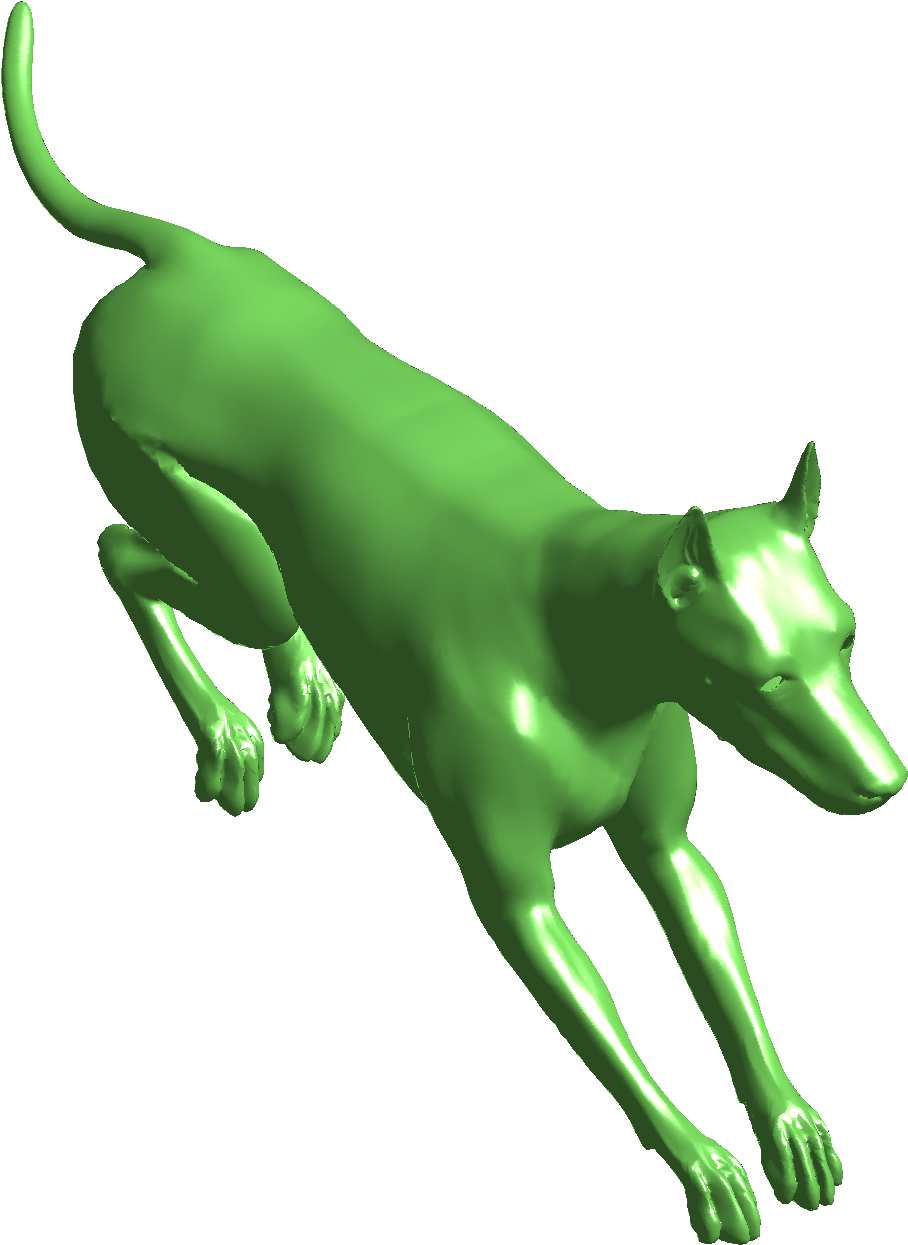}
\includegraphics[width=0.27\columnwidth]{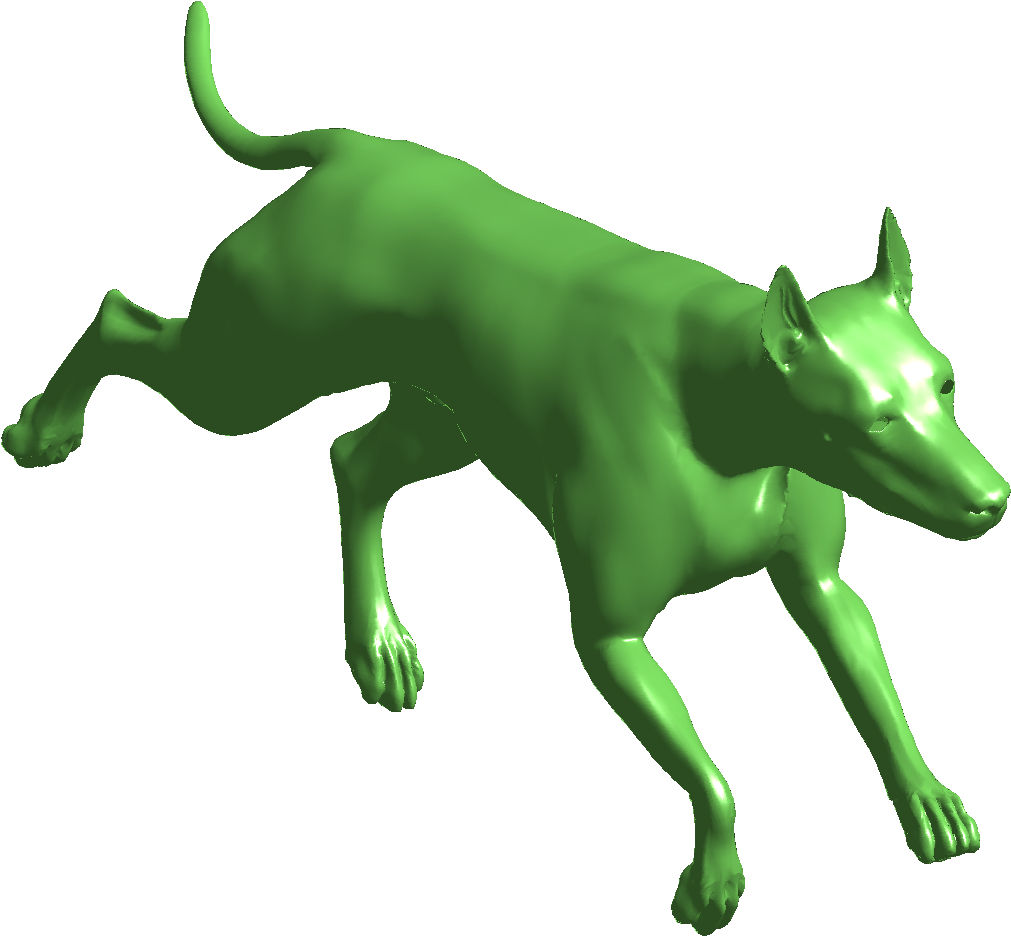}
\includegraphics[width=0.15\columnwidth]{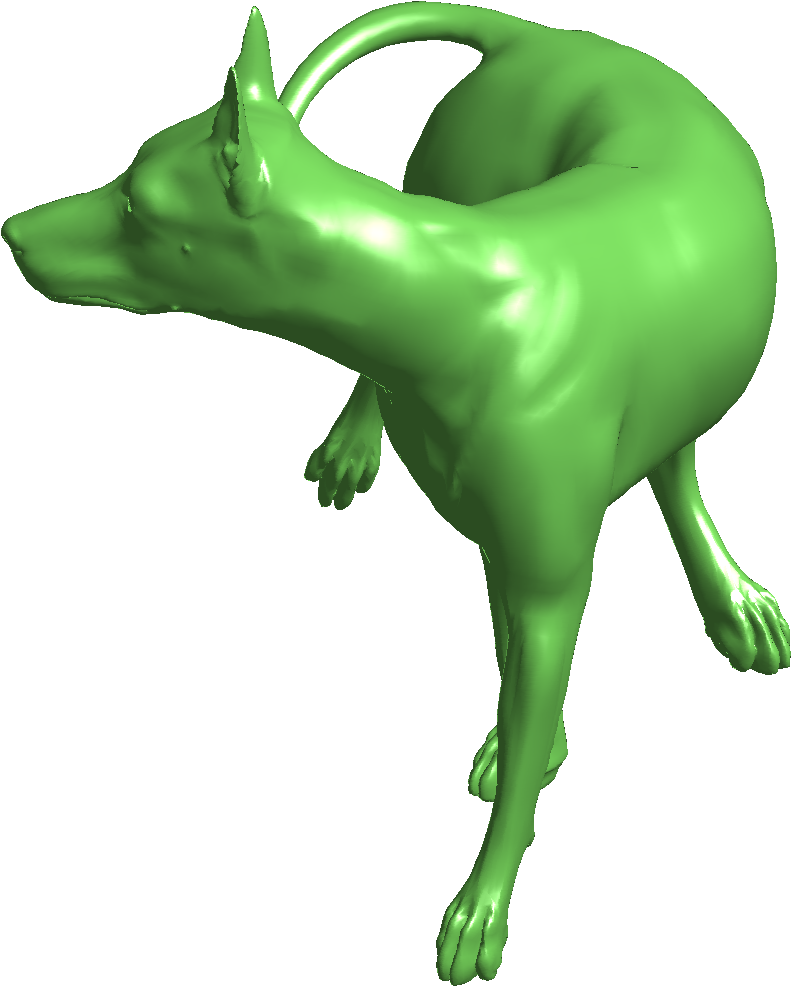}
\includegraphics[width=0.22\columnwidth]{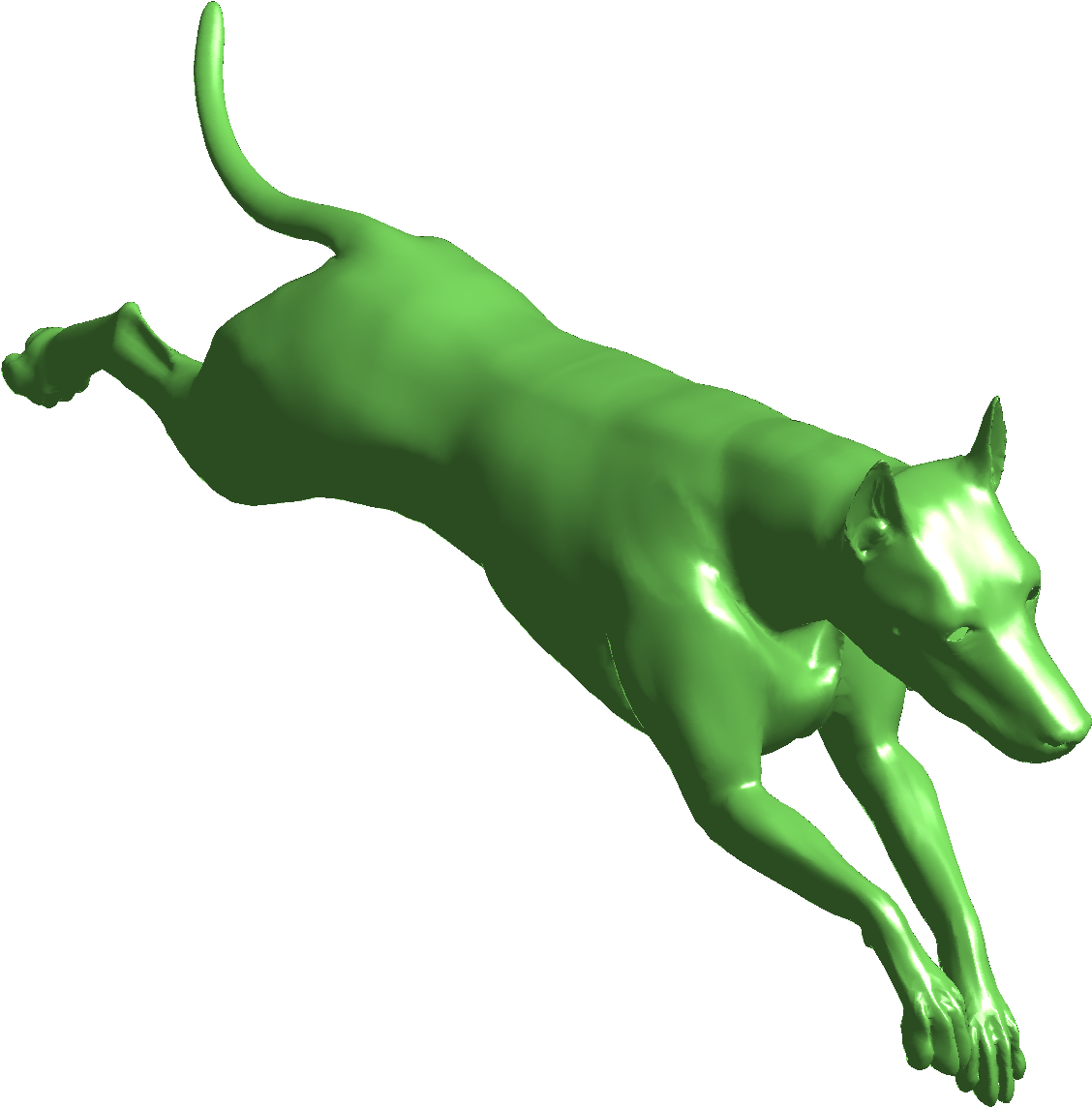}
\includegraphics[width=0.14\columnwidth]{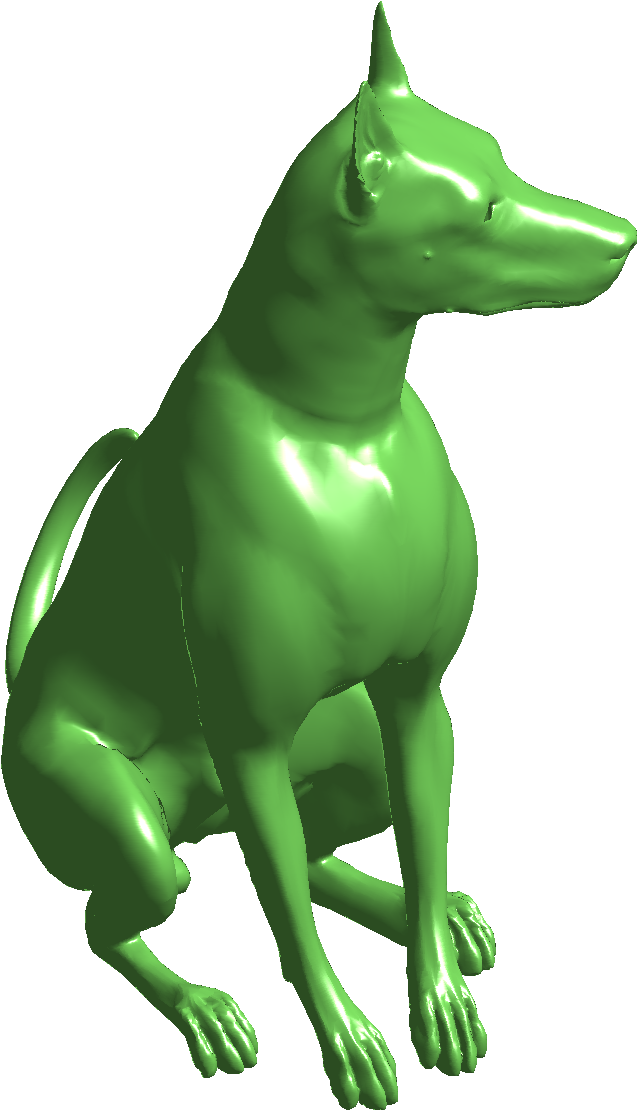}}
\fbox{
\includegraphics[width=0.21\columnwidth]{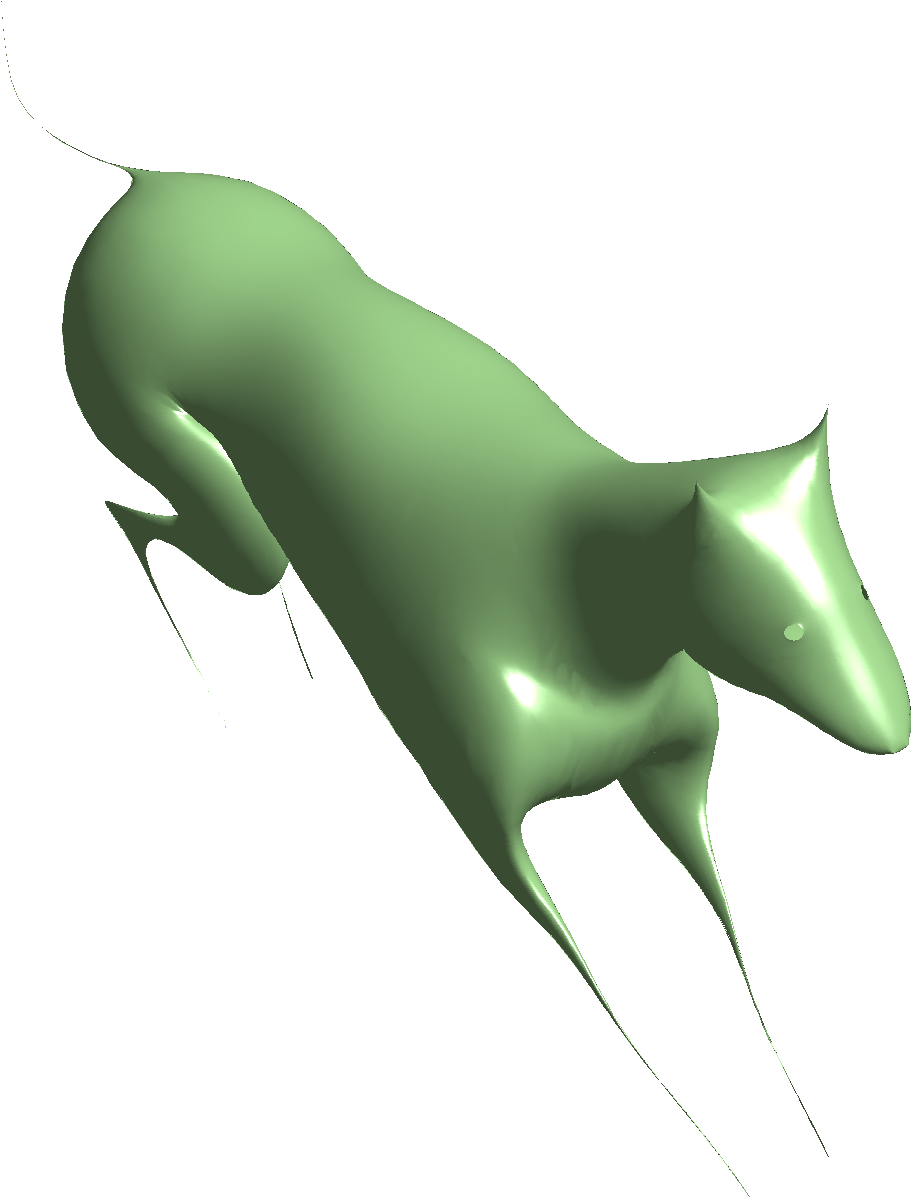}
\includegraphics[width=0.28\columnwidth]{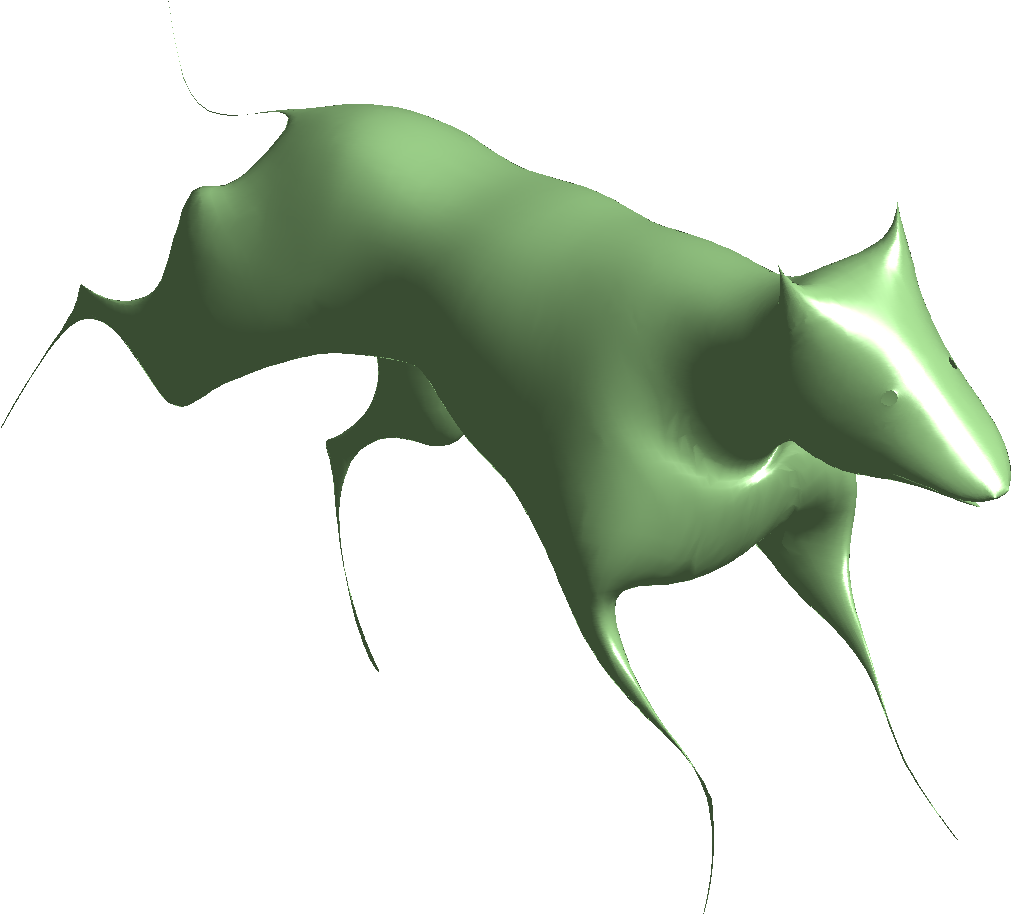}
\includegraphics[width=0.15\columnwidth]{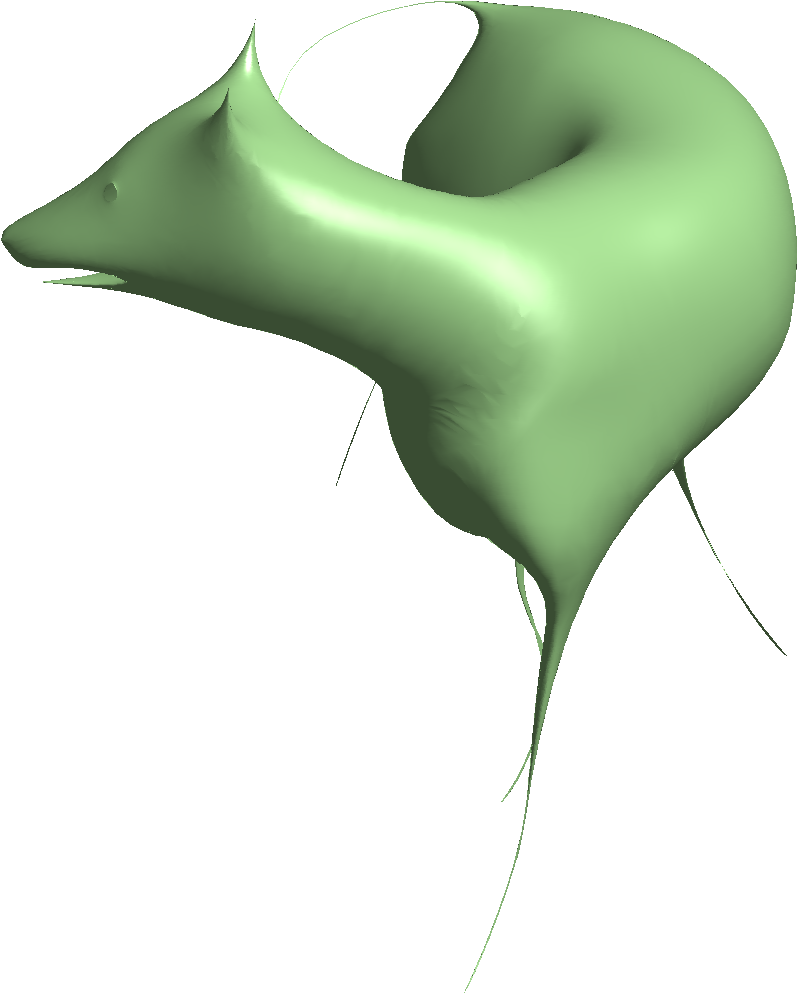}
\includegraphics[width=0.22\columnwidth]{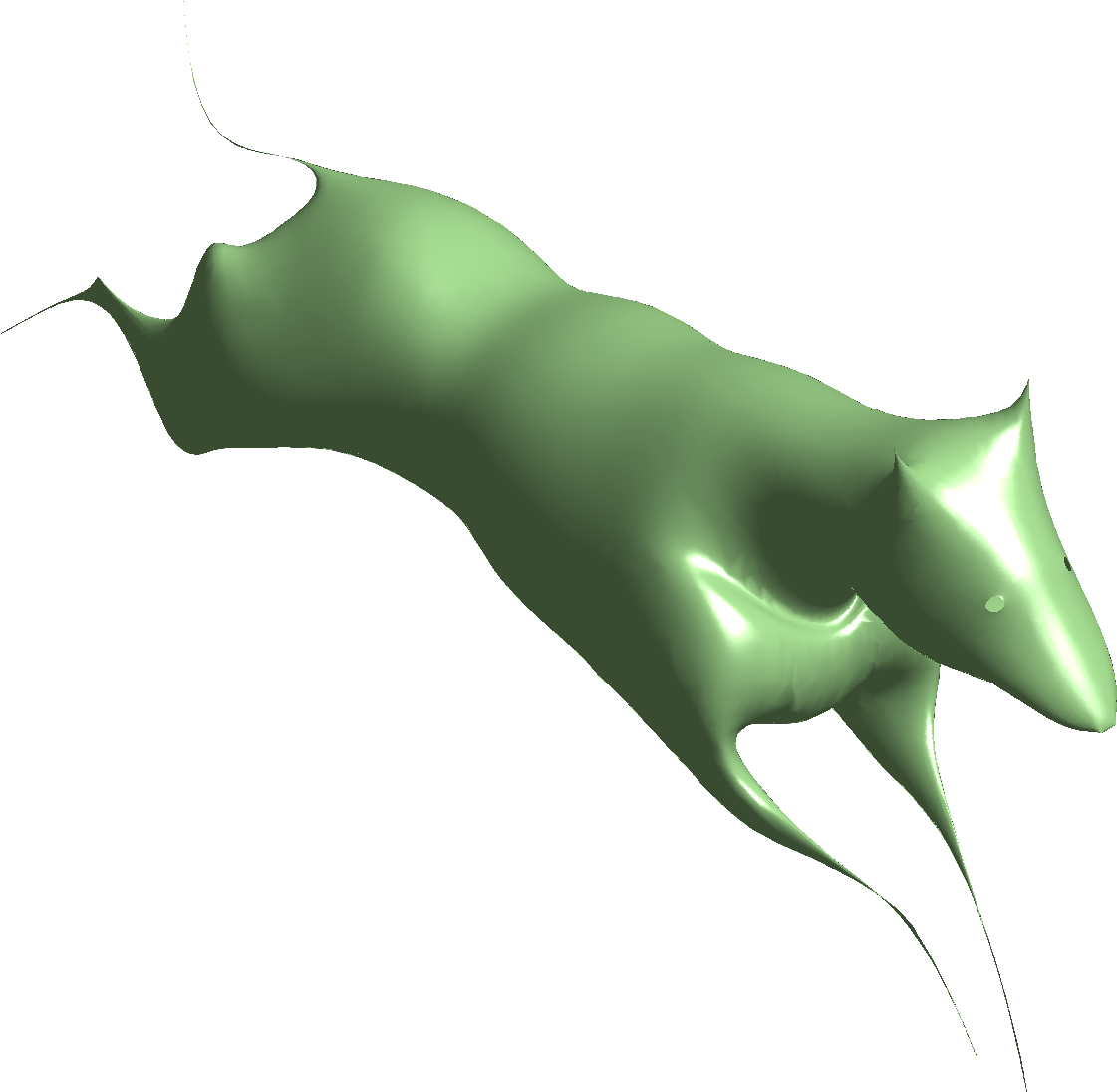}
\includegraphics[width=0.14\columnwidth]{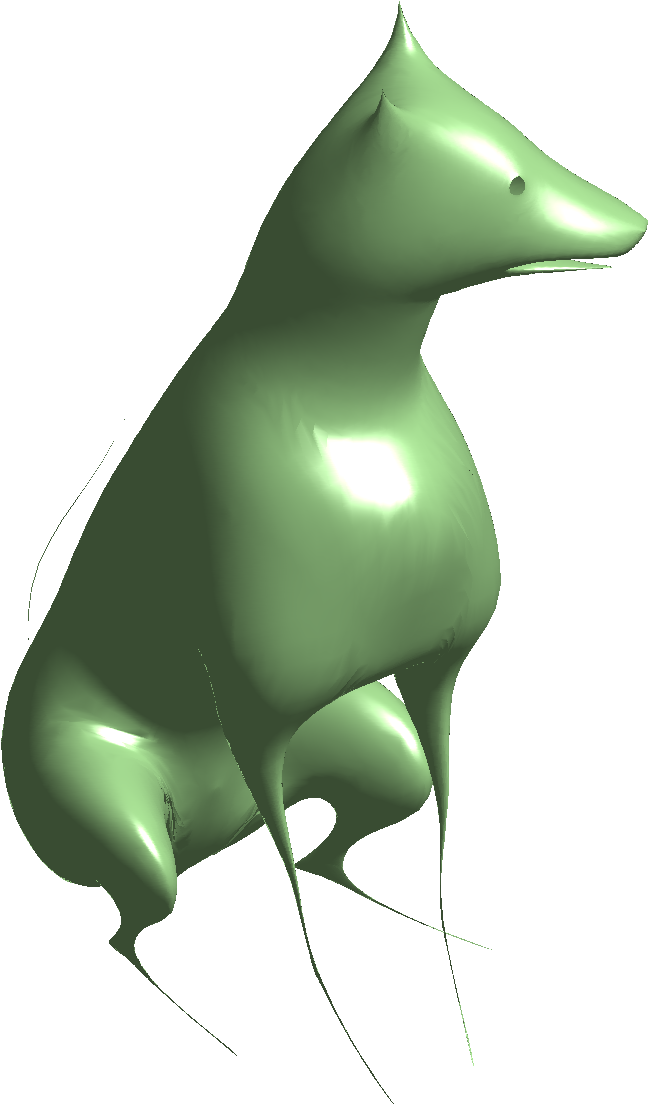}}
\fbox{
\includegraphics[width=0.20\columnwidth]{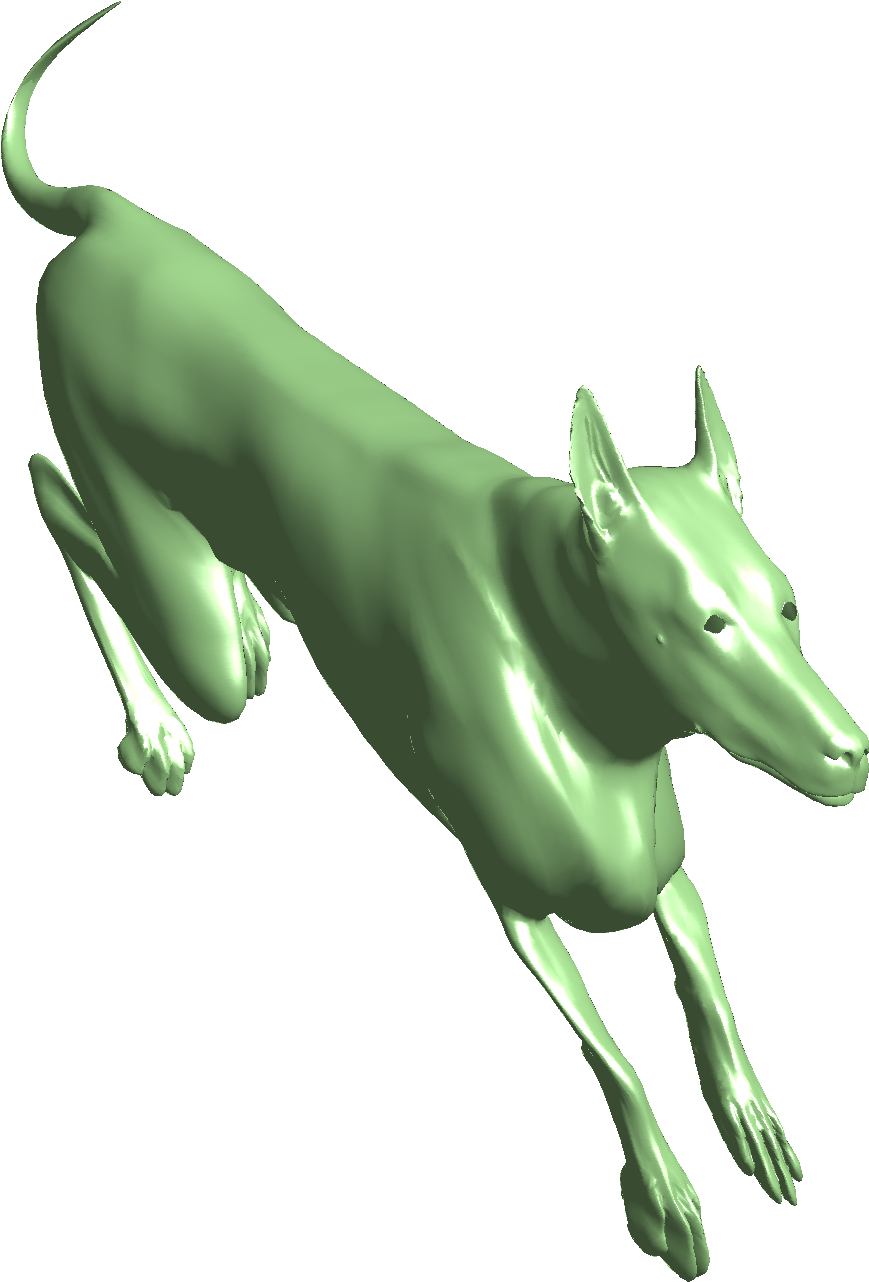}
\includegraphics[width=0.25\columnwidth]{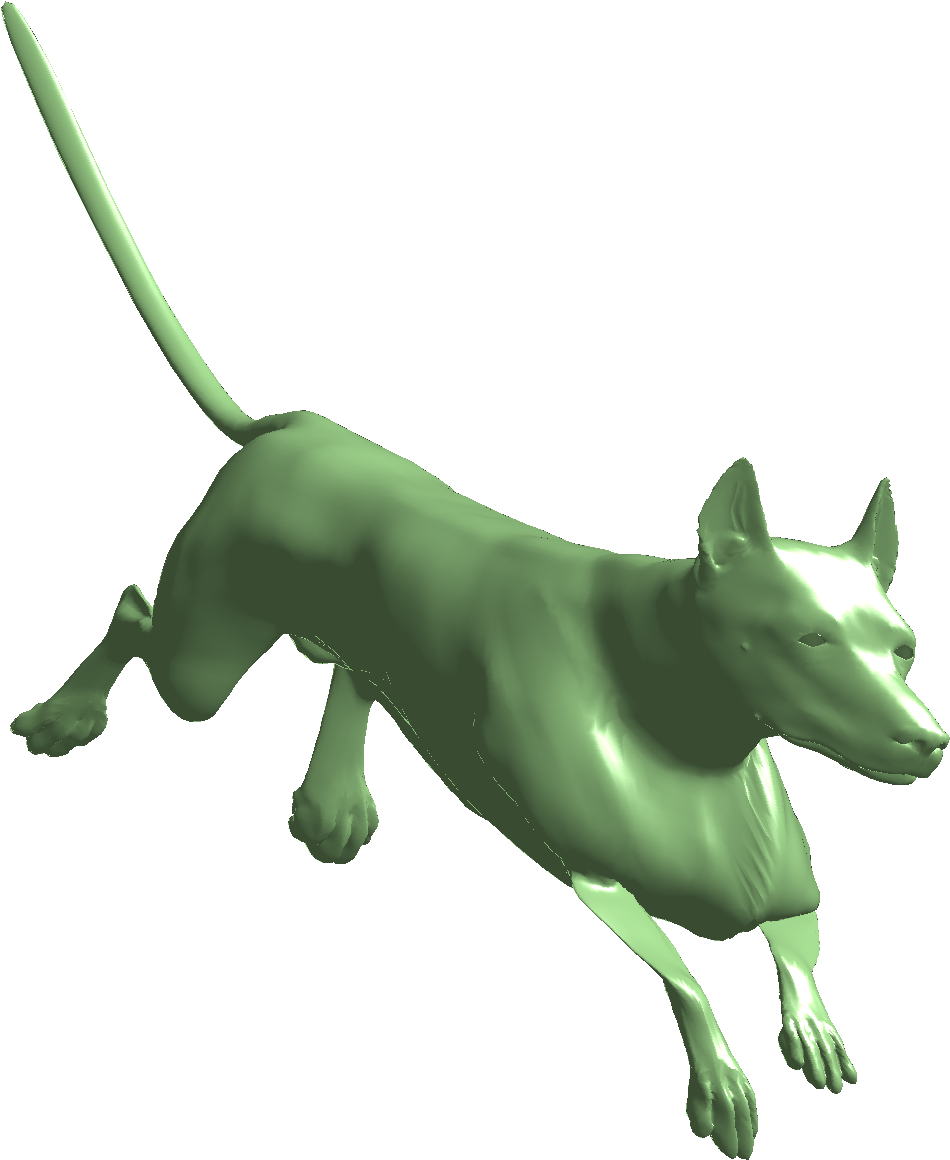}
\includegraphics[width=0.19\columnwidth]{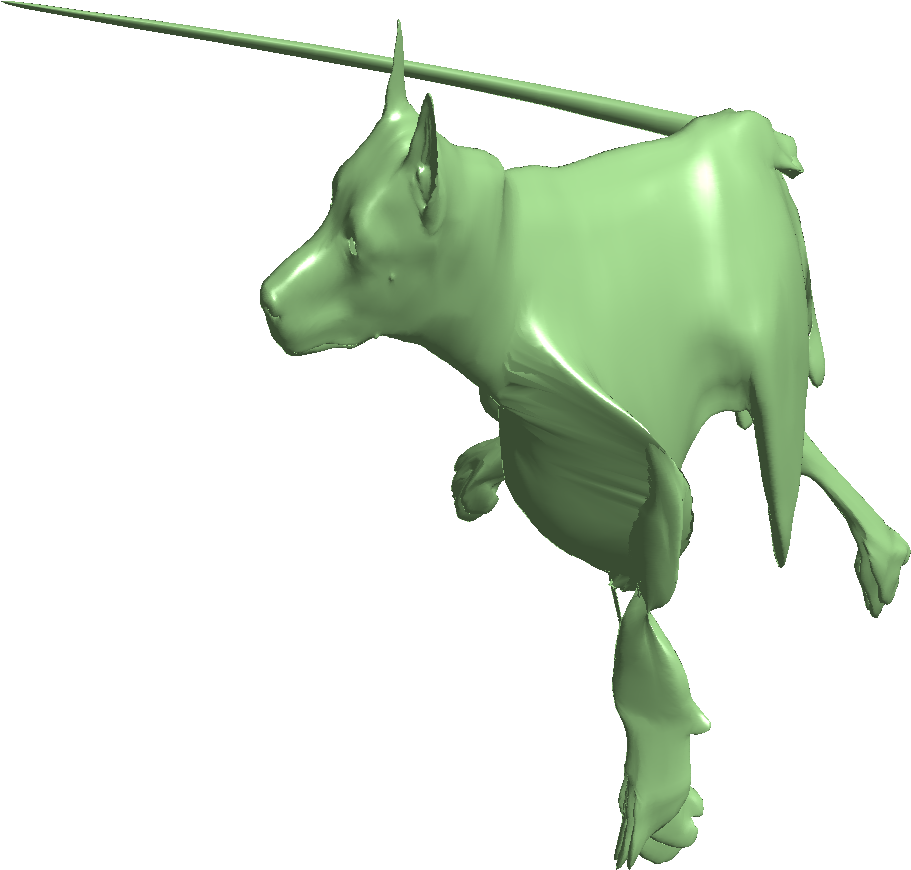}
\includegraphics[width=0.22\columnwidth]{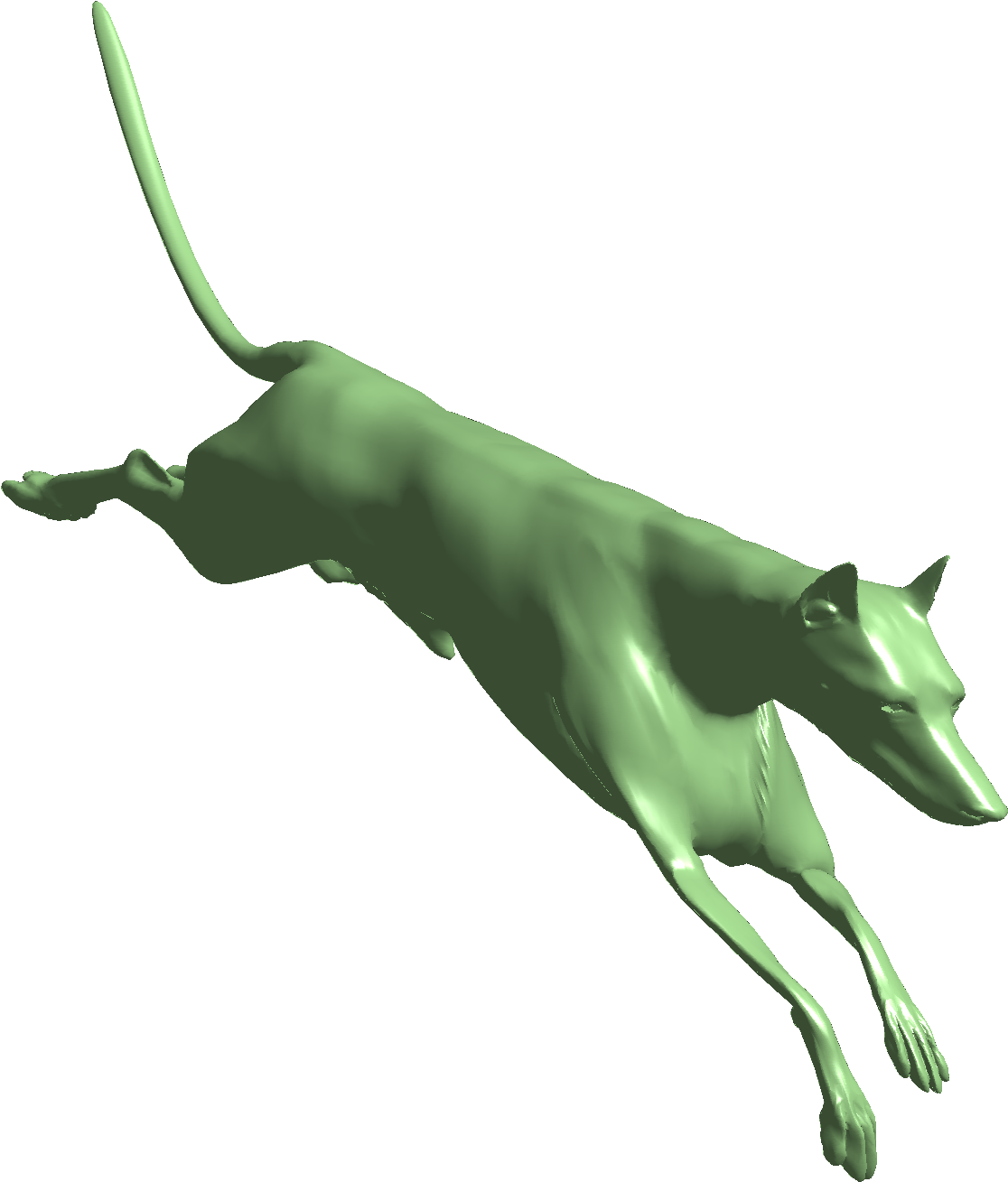}
\includegraphics[width=0.14\columnwidth]{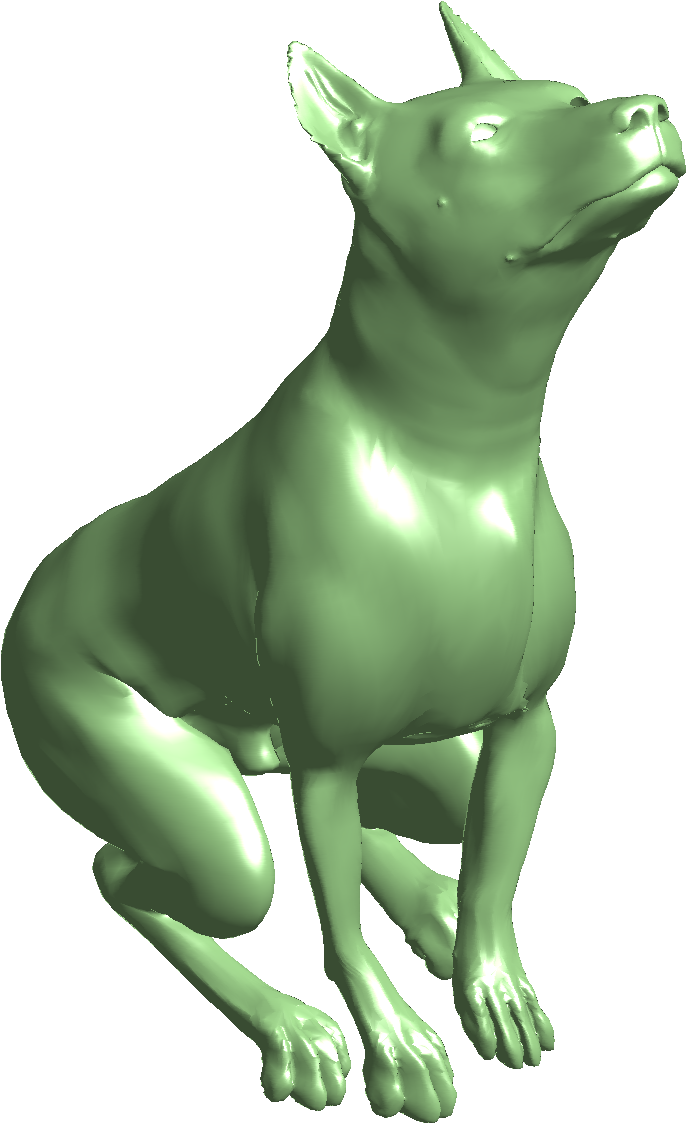}}
\fbox{
\includegraphics[width=0.22\columnwidth]{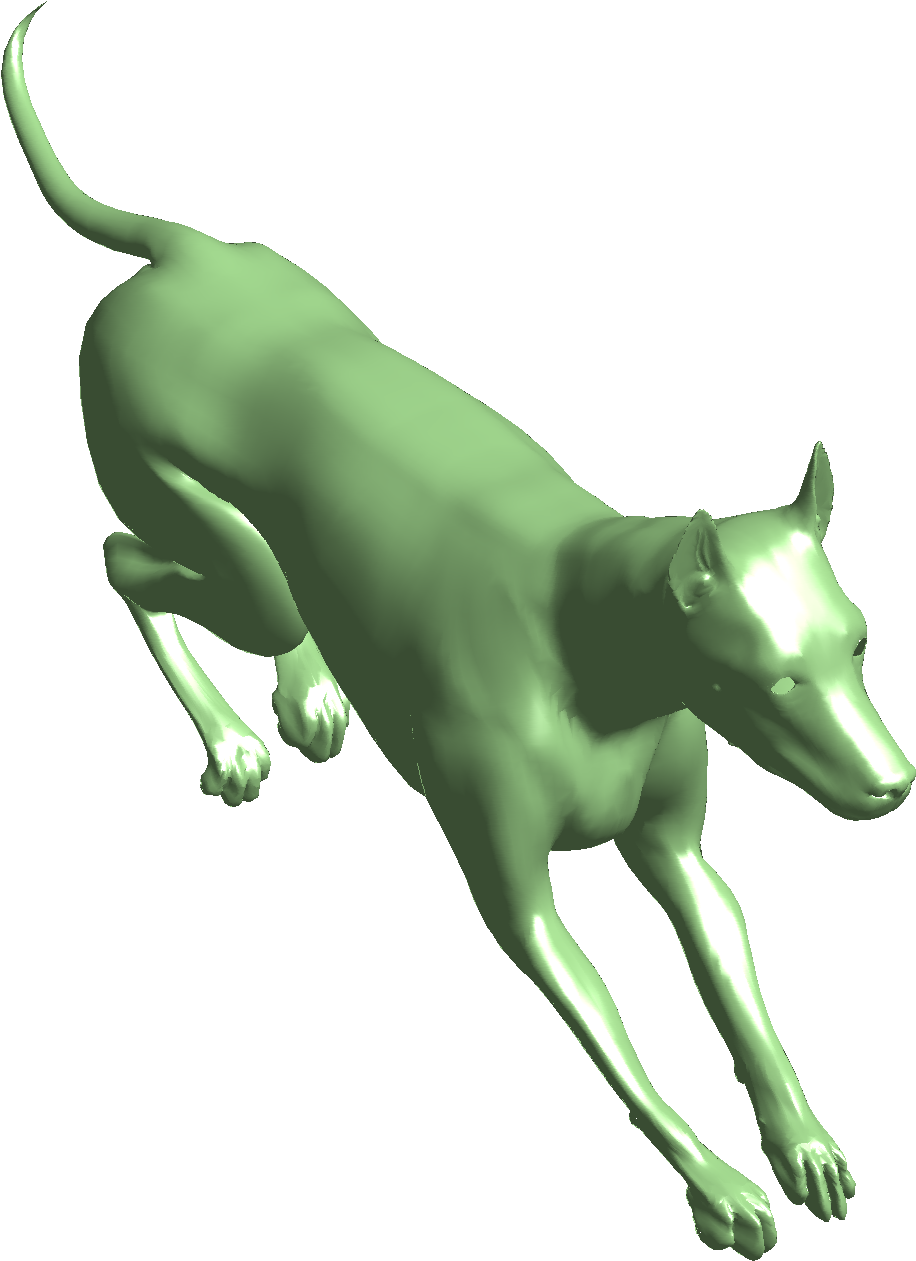}
\includegraphics[width=0.27\columnwidth]{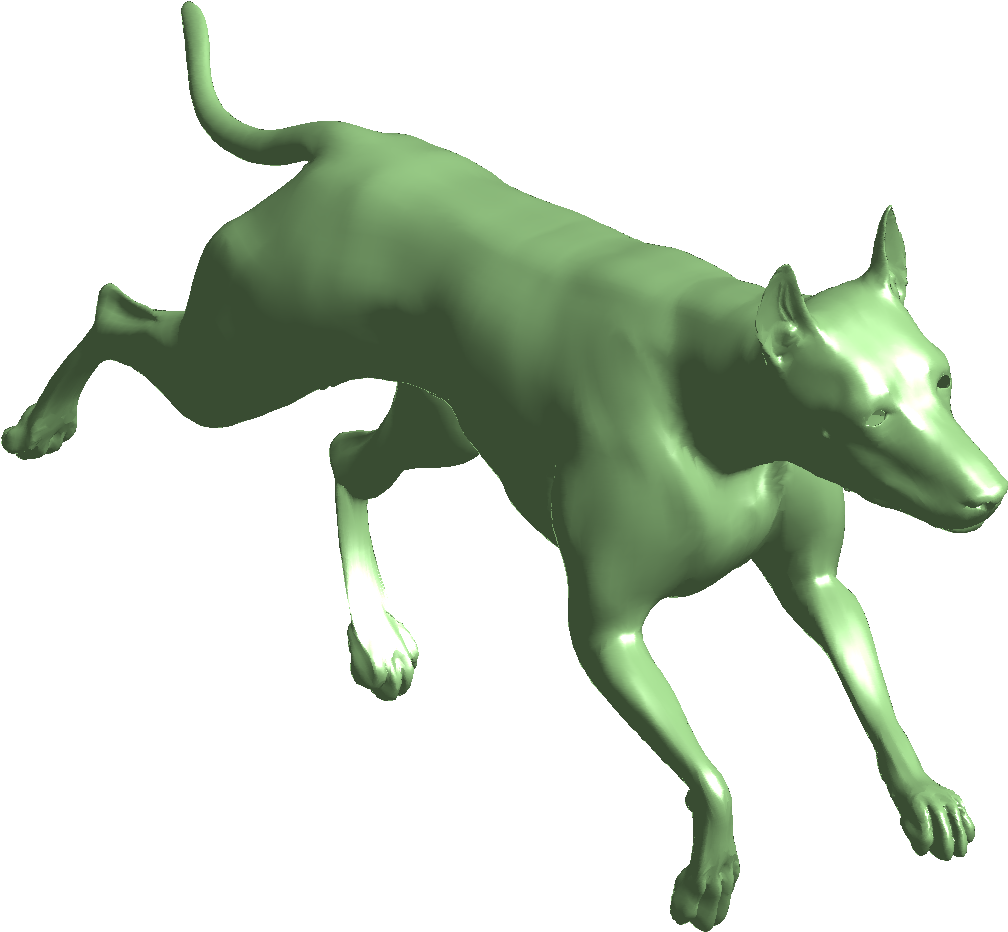}
\includegraphics[width=0.15\columnwidth]{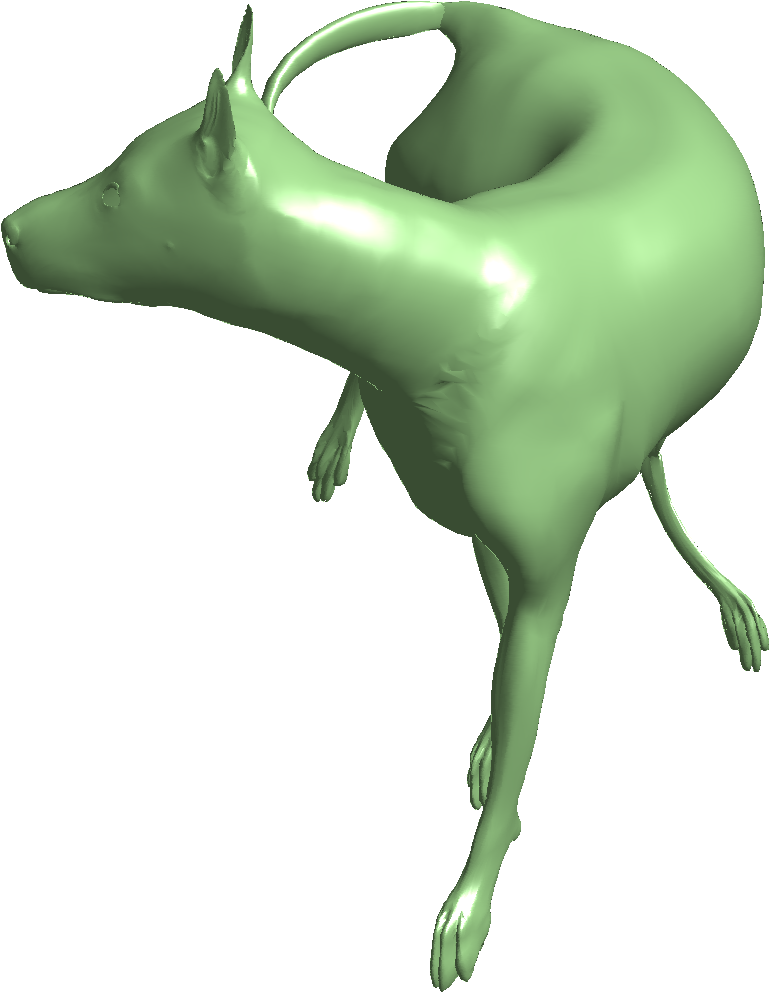}
\includegraphics[width=0.22\columnwidth]{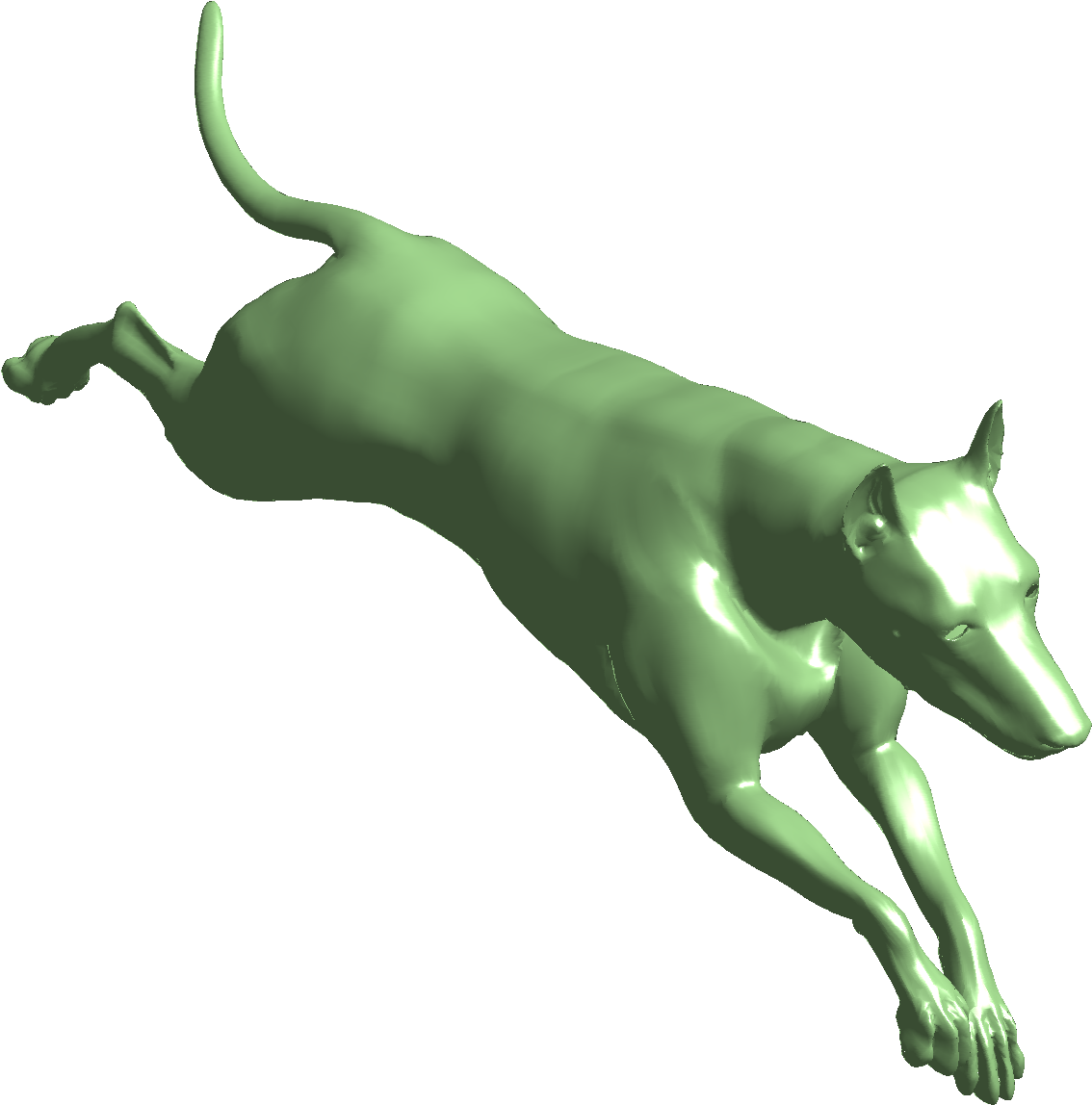}
\includegraphics[width=0.14\columnwidth]{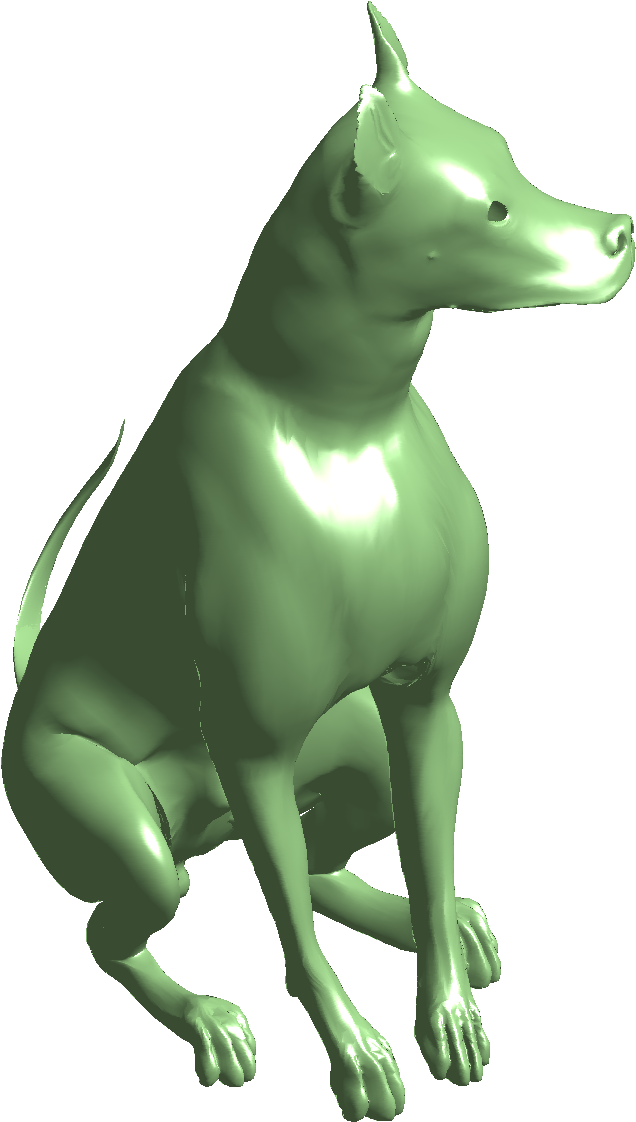}}
\caption{Reconstruction the dog shapes in the second row, by projecting their coordinates
  to the first  $100$ eigenfunctions  of the Laplace-Beltrami  (third row), 
  to the PCA basis trained with the two shapes from the first row (forth row),  
  and to the $100$ basis functions of the regularized-PCA basis trained with 
  the two dogs from the first row (bottom/fifth row).}
\label{fig:RPCA}
\end{figure*}

%
This model allows us to design an alternative basis which is 
 related  to the spectral domain but whose properties can be tuned  
 to fit specific information about the data.
 
\section{Conclusion}
A theoretical support for the selection 
 of the leading eigenfunctions of the Laplace-Beltrami operator of a given shape
 as a natural domain for morphometric study of surfaces was provided.
The optimality result motivates the design of efficient 
 shape matching and analysis algorithms.
It enabled us to find the most efficient representations of smooth functions on surfaces 
 in terms of both accuracy and complexity when projected onto the first eigenfunctions
  of the Laplace-Beltrami operator.
Our optimality criterion is obviously defined with respect to a given metric. 
In shape representation, the choice of an appropriate metric is probably as important 
 as the selection of the most efficient sub-space.
This was demonstrated in approximating the fine details and the general structure 
 of a shape of a horse in Section \ref{sec:scale}  using a regular metric, 
 a scale invariant one, and a metric interpolating between the two.
Spectral classical  scaling and its generalized version, benefit from the  
 presented optimality result, so as the regularization of classical PCA.
In both cases it was demonstrated that the decomposition of the LBO provides 
 a natural sub-space to operate in. 
 
The provably optimal representation space allows to construct efficient tools 
 for computational morphometry - the numerical study of shapes.
Revisiting the optimality criteria obviously lead to alternative domains 
 and hopefully better analysis tools that we plan to explore in the future.

\section{Acknowledgements}
YA and RK thank Anastasia Dubrovina, Alon Shtern, Aaron Wetzler, and Michael Zibulevsky for intriguing 
 discussions.
This research was supported by the European Community's FP7- ERC program, 
   grant agreement no. 267414.
HB was partially supported by NSF grant DMS-1207793 and also by grant number 238702 of the European Commission (ITN, project FIRST).

\appendix
\section*{Appendix}\label{App:appendix}
 \begin{theorem}
 \label{thrm:main}
Given a Riemannian manifold $S$ with a metric $(g_{ij})$, the induced LBO, $\Delta_g$,
 and its spectral basis $\phi_i$, where $\Delta_g\phi_i=\lambda_i\phi_i$, 
  and a real scalar value $0\leq\alpha<1$, there is no orthonormal basis of
  functions $\{\psi_i\}_{i=1}^\infty$, and an integer $n$ such that 
$$
 \left\|f-\sum_{i=1}^n\langle f, \psi_i \rangle \psi_i\right\|_2^2\leq
      \alpha\frac{\|\nabla_g f\|_2^2}{\lambda_{n+1}}, \,\,\,\,\,\,\,\,\,\, \forall f.
$$
\end{theorem}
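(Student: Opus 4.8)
The plan is to argue by contradiction, using the first $n+1$ eigenfunctions of $\Delta_g$ as test functions together with an elementary dimension count, so that the Courant--Fischer principle is not needed. Suppose such an orthonormal family $\{\psi_i\}_{i=1}^\infty$ and such an integer $n\ge 1$ exist, and set $W=\spn\{\phi_1,\dots,\phi_{n+1}\}$; since the $\phi_i$ are orthonormal, $\dim W=n+1$. I would then consider the linear map $T\colon W\to\RR^n$ given by $Tf=(\langle f,\psi_1\rangle,\dots,\langle f,\psi_n\rangle)$. Its domain has dimension $n+1>n$, so $T$ cannot be injective; pick $f\in W$ with $f\ne 0$ and $\langle f,\psi_i\rangle=0$ for $i=1,\dots,n$. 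For this particular $f$ every summand of $\sum_{i=1}^n\langle f,\psi_i\rangle\psi_i$ vanishes, hence the left-hand side of the asserted inequality equals $\|f\|_2^2$.

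Next I would bound the Dirichlet energy of $f$ from above. Writing $f=\sum_{j=1}^{n+1}c_j\phi_j$, orthonormality gives $\|f\|_2^2=\sum_{j=1}^{n+1}c_j^2$, while Green's identity (using the Neumann condition when $\partial S\ne\emptyset$) together with the eigenvalue equation yields $\langle\nabla_g\phi_j,\nabla_g\phi_k\rangle=\lambda_j\delta_{jk}$, so that $\|\nabla_g f\|_2^2=\sum_{j=1}^{n+1}\lambda_j c_j^2\le\lambda_{n+1}\sum_{j=1}^{n+1}c_j^2=\lambda_{n+1}\|f\|_2^2$, the inequality using $\lambda_j\le\lambda_{n+1}$ for $j\le n+1$. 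Substituting into the hypothesized bound gives $\|f\|_2^2\le\alpha\|\nabla_g f\|_2^2/\lambda_{n+1}\le\alpha\|f\|_2^2$; since $f\ne 0$ and $\lambda_{n+1}>0$, this forces $\alpha\ge 1$, contradicting $0\le\alpha<1$.

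The conceptual core is the non-injectivity step, which is exactly what replaces the maximization over codimension-$n$ subspaces in the Courant--Fischer formulation; everything else is a direct computation, so I do not expect a real obstacle. The two points needing a little care are ensuring $\lambda_{n+1}>0$ so that the argument is not vacuous (true for $n\ge 1$ on a connected $S$, where $0=\lambda_1<\lambda_2$) and the identity $\|\nabla_g\phi_j\|_2^2=\lambda_j$, which is just integration by parts against $\Delta_g\phi_j=\lambda_j\phi_j$. It is also worth noting that the argument never uses orthonormality of the $\psi_i$ beyond the fact that $n$ of them span a subspace of dimension at most $n$, so the same proof in fact recovers the more general statement of Section~\ref{sec:theorm}.
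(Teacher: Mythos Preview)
Your argument is correct. The dimension count giving a nonzero $f\in\spn\{\phi_1,\dots,\phi_{n+1}\}$ orthogonal to $\psi_1,\dots,\psi_n$, followed by the bound $\|\nabla_g f\|_2^2\le\lambda_{n+1}\|f\|_2^2$, is exactly what is needed, and your remarks on $\lambda_{n+1}>0$ and on not needing orthonormality of the $\psi_i$ are both to the point.

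The paper's Appendix proof takes a different, somewhat more elaborate route. It also works in the test space $W=\spn\{\phi_1,\dots,\phi_{n+1}\}$, but instead of extracting a single $f$ in the kernel of your map $T$, it bounds the projection error uniformly over all of $W$ by $\alpha\|f\|_2^2$, and then invokes a separate lemma on orthogonal projections: if $\|Pv\|\le k\|v\|$ with $k<1$ for every $v$ in the span of $n+1$ elements of an orthonormal basis, then $\dim\ker P\ge n+1$. Applying this to $P=I-Q$, with $Q$ the orthogonal projection onto $\spn\{\psi_1,\dots,\psi_n\}$, yields $\dim\ker P\ge n+1$ while manifestly $\dim\ker P=n$, a contradiction. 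Your rank--nullity step is a cleaner substitute for that lemma; it is, as you observe, precisely the ingredient behind the easy inequality in the Courant--Fischer principle used in the main-text proof, so your argument sits naturally between the two proofs the paper gives and is shorter than either.
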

%
%
To prove the optimality of the LBO eigenbasis, let us first prove the following lemma.
\begin{lemma}
Given an orthonormal basis $B=\{b_1,b_2,\ldots\}$, of a Hilbert space $\mathcal{V}$, an orthogonal projection operator $P$ of $\mathcal{V}$,  such that 
$$
\|P v\|\leq k\|v\|,~~~\forall v\in\spn{\{b_i,~~1\leq i\leq n\}}
$$
where $0<k<1$,
then 
$$
\dim(\ker(P))\geq n.
$$
\end{lemma}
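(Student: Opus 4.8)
The plan is to exploit the orthogonality of $P$ — through the Pythagorean identity — to exhibit an $n$-dimensional subspace contained in $\ker(P)$. The natural object to look at is the complementary operator $Q = I - P$: since $P$ is an orthogonal projection, $Q$ is an orthogonal projection as well, with $\im(Q) = \ker(P)$, and for every $v \in \mathcal{V}$ the vectors $Pv$ and $Qv$ are orthogonal, so $\|v\|^2 = \|Pv\|^2 + \|Qv\|^2$.

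Next I would set $W = \spn\{b_1,\dots,b_n\}$, a subspace of dimension exactly $n$, and show that $Q$ is injective on $W$. Indeed, for nonzero $w \in W$ the hypothesis $\|Pw\| \le k\|w\|$ together with the identity above gives
\[
 \|Qw\|^2 \;=\; \|w\|^2 - \|Pw\|^2 \;\ge\; (1-k^2)\|w\|^2 \;>\; 0 ,
\]
using $0<k<1$, so $Qw \ne 0$. Hence the linear map $Q|_W$ has trivial kernel, so $Q(W)$ is a subspace of dimension exactly $n$, and since $Q(W)\subseteq \im(Q) = \ker(P)$ we conclude $\dim(\ker(P)) \ge n$, which is the claim.

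I do not foresee a genuine obstacle here. The only point requiring care is to use that $P$ is a true orthogonal (self-adjoint) projection: this is exactly what makes $I-P$ an orthogonal projection and yields the cross-term-free splitting $\|v\|^2 = \|Pv\|^2 + \|(I-P)v\|^2$; without self-adjointness one would still have $\im(I-P)=\ker(P)$ but not the norm identity that drives the estimate. Note also that the argument, and the conclusion $\dim(\ker(P))\ge n$, is insensitive to whether $\mathcal{V}$ (or $\ker(P)$ itself) is finite- or infinite-dimensional, so no separate treatment of the infinite-dimensional case is needed.
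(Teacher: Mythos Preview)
Your argument is correct and is in fact cleaner than the paper's. You work directly with the complementary projection $Q=I-P$ and use the Pythagorean identity $\|v\|^2=\|Pv\|^2+\|Qv\|^2$ to show that $Q$ is injective on the $n$-dimensional span $W$, whence $\dim(\ker P)=\dim(\im Q)\ge\dim Q(W)=n$. The paper instead partitions the full orthonormal basis into $\mathcal{B}_1=\{b_i:\|Pb_i\|<1\}$ and $\mathcal{B}_2=\{b_i:\|Pb_i\|=1\}$, observes $\spn\mathcal{B}_2\subset\im P$ so that $\ker P\subset\spn\mathcal{B}_1$, and then argues by contradiction that $\dim(\ker P)<n$ would allow one to pick a unit vector $u\in\spn\mathcal{B}_1\cap(\ker P)^\perp$ with $\|Pu\|=1$. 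Your route is shorter, avoids the contradiction, and actually uses less: it never invokes the ambient orthonormal basis $B$ at all, only that $b_1,\dots,b_n$ span an $n$-dimensional subspace on which the contraction hypothesis holds. The paper's route, by contrast, leans on the decomposition $\mathcal{V}=\spn\mathcal{B}_1\oplus\spn\mathcal{B}_2$ coming from the basis.
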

\begin{proof}
Let us denote 
$$
\begin{disarray}{ll}
\mathcal{B}_1&=\{b_i, \|P b_i\|<1\}\\
\mathcal{B}_2&=\{b_i, \|P b_i\|=1\}\\
\mathcal{P}_1&=\ker(P)\\
\mathcal{P}_2&=\im(P).
\end{disarray}
$$
Because the operator $P$ is orthogonal and the basis $B$ is orthonormal, we have
$$
\mathcal{V}=\mathcal{B}_1\bigoplus \mathcal{B}_2=\mathcal{P}_1\bigoplus \mathcal{P}_2,
$$
and
$$
\begin{disarray}{ll}
\mathcal{B}_1^\perp&=\mathcal{B}_2\\
\mathcal{P}_1^\perp&=\mathcal{P}_2.
\end{disarray}
$$
By definition, we have that
$$
\mathcal{B}_2\subset\mathcal{P}_2.
$$
Then,
$$
\mathcal{B}_2^\perp\supset\mathcal{P}_2^\perp,
$$
and since $\mathcal{B}_2^\perp=\mathcal{B}_1$ and $\mathcal{P}_2^\perp=\mathcal{P}_1$, we have
$$
\mathcal{P}_1 \subset \mathcal{B}_1.
$$
Now, assume that
$$
\dim(\mathcal{P}_1)=\dim(\ker(P))<n \le \dim \mathcal{B}_1 
$$
Then, $\mathcal{P}_1\neq  \mathcal{B}_1$, and we can find a vector $u\in \mathcal{P}_1^\perp$ such that
 $\|u\|=1$ and  $u\in \mathcal{B}_1$. Since $\mathcal{P}_1^\perp=\ker(P)^\perp=\mathcal{P}_2$, 
  it follows that
$$
\|Pu\|=1.
$$ 
But, this contradicts the fact that $u\in \mathcal{B}_1$, because $u\in \mathcal{B}_1$ implies
$$
\|Pu\|<1.
$$
Then, 
$$
\dim(\ker(P))\geq n.
$$
\end{proof}
Equipped with this result we are now ready to prove Theorem A.1
\begin{proof}
Assume that there exists such a basis,  $\{\psi_i\}$. Then, the representation of a
 function $f$ in the eigenbasis of the LBO can be written as
$$
f=\sum_{i=1}^\infty \langle f,\phi_i\rangle\phi_i=\sum_{i=1}^\infty \beta_i\phi_i.
$$
We straightforwardly have   
$$
\|\nabla_g f\|_2^2=\sum_{i=1}^\infty \lambda_i \beta_i^2,
$$
and it follows that
$$
\alpha\frac{\|\nabla_g f\|_2^2}{\lambda_{n+1}}=\sum_{i=1}^\infty \underbrace{\frac{\alpha\lambda_i}{\lambda_{n+1}}}_{\tilde\lambda_i} \beta_i^2
 =\sum_{i=1}^\infty \tilde\lambda_i \beta_i^2.
$$
Moreover, 
$$
\left\|f-\sum_{i=1}^n\langle f,\psi_i \rangle \psi_i\right\|_2^2\leq\alpha\frac{\|\nabla_g f\|_2^2}{\lambda_{n+1}}\leq\sum_{i=1}^\infty \tilde\lambda_i \beta_i^2.
$$
Then, replacing $f$ with $\sum_{j=1}^{n+1}\beta_j\phi_j$, 
we have
$$
\left\|\sum_{j=1}^{n+1}\beta_j\phi_j-\sum_{i=1}^n\left\langle \sum_{j=1}^{n+1}\beta_j\phi_j,\psi_i \right\rangle \psi_i\right\|_2^2
  \le
  \sum_{j=1}^{n+1}\beta_j^2\tilde\lambda_j
  \le
  \left ( \max_{j=1}^{n+1}\tilde\lambda_j\right )\left (  \sum_{j=1}^{n+1}\beta_j^2 \right ), 
$$
 and since $\tilde\lambda_i<1,~~\forall i,~~ 1\leq i\leq n+1$, we can state that there is a set of $n+1$ orthonormal vectors $\phi_i$ belonging to an orthonormal basis whose projection error (over the space spanned by $\psi_i$) is smaller than one.
According to the previous lemma, the original assumption leads to a contradiction
  because the dimension of the kernel of the projection on the space spanned by $\psi_i$,
  $\, 1\le i\le n$,  is $n$. 
\end{proof}
%

\bibliographystyle{abbrv} 
\bibliography{biblio_full,mybib}{}
 \end{document}